\newcommand{\gh}[1][j]{\ensuremath{\bm{\hat{g}_{#1}}}}
\newcommand{\bmean}[1]{\ensuremath{\bm{\mu_{(#1)}}}}
\newcommand{\gs}{\ensuremath{\left(\frac{\gamma}{\sigma_j}\right)}}
\newcommand{\gsm}{\ensuremath{\left(\frac{\gamma}{m\sigma_j}\right)}}
\newcommand{\bmbn}[1][j]{\ensuremath{\bm{\bn}_{#1}}}
\newcommand{\Hess}[1][jj]{\ensuremath{\bm{H}_{{#1}}}}
\newcommand{\z}[1][j]{\ensuremath{z_{#1}}}
\newcommand{\zj}{\ensuremath{z_j}}
\newcommand{\zk}{\ensuremath{z_k}}
\newcommand{\zq}{\ensuremath{z_q}}
\newtheorem*{definition*}{Definition}
\newtheorem*{theorem*}{Theorem}
\newtheorem{theorem}{Theorem}[section]
\newtheorem{definition}[theorem]{Definition}
\newtheorem{fact}[theorem]{Fact}
\newcommand{\loss}{\mathcal{L}}
\renewcommand{\L}{\mathcal{L}}
\newcommand{\hL}{\widehat{\mathcal{L}}}
\newcommand{\inp}{x}
\renewcommand{\l}{y}
\newcommand{\bn}{\hat{y}}
\renewcommand{\d}[2]{\frac{\partial #1}{\partial #2}}
\newcommand{\inner}[2]{\left\langle{#1},{#2}\right\rangle}
\newcommand{\norm}[1]{\left|\left|{#1}\right|\right|}
\renewcommand{\L}{\mathcal{L}}
\renewcommand{\b}[1]{{#1}^{(b)}}
\newcommand{\kk}[1]{{#1}^{(k)}}
\renewcommand{\d}[2]{\frac{\partial #1}{\partial #2}}
\declaretheorem[name=The importance of lipschitzness in
optimization,numbered=no]{lip-important}
\declaretheorem[name=The importance of beta-smoothness in
optimization,numbered=no]{beta-important}
\title{How Does Batch Normalization Help Optimization?}
\author{
  Shibani Santurkar\thanks{Equal contribution.}\\
  MIT\\
  \texttt{shibani@mit.edu}
  \And
  Dimitris Tsipras\footnotemark[1] \\
  MIT\\
  \texttt{tsipras@mit.edu}
  \And
  Andrew Ilyas\footnotemark[1]\\
  MIT\\
  \texttt{ailyas@mit.edu}
  \And
  Aleksander M\k{a}dry\\
  MIT\\
  \texttt{madry@mit.edu}
}
\begin{document}

\maketitle

\begin{abstract}
    Batch Normalization (BatchNorm) is a widely adopted technique that enables faster and more stable training of deep neural networks (DNNs).
Despite its pervasiveness, the exact reasons for BatchNorm's effectiveness are still poorly understood.
The popular belief is that this effectiveness stems from controlling the change of the layers' input distributions during training to reduce the so-called ``internal covariate shift''.
In this work, we demonstrate that such distributional stability of layer inputs has little to do with the success of BatchNorm.
Instead, we uncover a more fundamental impact of BatchNorm on the training process: it makes the optimization landscape significantly smoother.
This smoothness induces a more predictive and stable behavior of the gradients, allowing for faster training.

\end{abstract}
\section{Introduction}
\label{sec:intro}
Over the last decade, deep learning has made impressive progress on a variety of notoriously difficult tasks in computer vision~\cite{KrizhevskySH12,HeZRS16}, speech recognition~\cite{GravesMH13}, machine translation~\cite{SutskeverVL14}, and game-playing~\cite{MnihKSAVBGRFO15,SilverHCGSVSAPL16}. This progress hinged on a number of major advances in terms of hardware, datasets~\cite{Krizhevsky09,RussakovskyDSKSMHKKBBF15}, and algorithmic and architectural techniques~\cite{SrivastavaHKSS14,KingmaB15,NairH10,SutskeverMDH13}. One of the most prominent examples of such advances was batch normalization (BatchNorm)~\cite{IoffeS15}.

At a high level, BatchNorm is a technique that aims to improve the training of neural networks by stabilizing the distributions of layer inputs. This is achieved by introducing additional network layers that control the first two moments (mean and variance) of these distributions.

The practical success of BatchNorm is indisputable. By now, it is used by default in most deep learning models, both in research (more than 6,000 citations) and real-world settings. Somewhat shockingly,  however, despite its prominence, we still have a poor understanding of what the effectiveness of BatchNorm is stemming from.
In fact, there are now a number of works that provide alternatives to BatchNorm~\cite{BaKH16,ClevertUH16,KlambauerUMH17,WuHe18}, but none of them seem to bring us any closer to understanding this issue. (A similar point was also raised recently in \cite{RahimiR17}.)

Currently, the most widely accepted explanation of BatchNorm's success, as well as its original motivation, relates to so-called \emph{internal covariate shift} (ICS). Informally, ICS refers to the change in the distribution of layer inputs caused by updates to the preceding layers. It is conjectured that such continual change negatively impacts training. The goal of BatchNorm was to reduce ICS and thus remedy this effect.

Even though this explanation is widely accepted, we seem to have little concrete evidence supporting it. 
In particular, we still do not understand the link between ICS and training performance. The chief goal of this paper is to address all these shortcomings. Our exploration lead to somewhat startling discoveries.

\paragraph*{Our Contributions.}
Our point of start is demonstrating that there does not seem to be any link between the performance gain of BatchNorm and the reduction of internal covariate shift. Or that this link is tenuous, at best. In fact, we find that in a certain sense {\em BatchNorm might not even be reducing internal covariate shift}.

We then turn our attention to identifying the roots of BatchNorm's success. Specifically, we demonstrate that BatchNorm impacts network training in a fundamental way: it makes the landscape of the corresponding optimization problem {\em significantly more smooth}. This ensures, in particular, that the gradients are more predictive and thus allows for use of larger range of learning rates and faster network convergence. We provide an empirical demonstration of these findings as well as their theoretical justification.
We prove that, under natural conditions, the Lipschitzness of both the loss and the gradients (also known as $\beta$-smoothness~\cite{Nesterov14}) are improved in models with BatchNorm.

Finally, we find that this smoothening effect is not uniquely tied to BatchNorm. A number of other natural normalization techniques have a similar (and, sometime, even stronger) effect.
In particular, they all offer similar improvements in the training performance.

We believe that understanding the roots of such a fundamental techniques as BatchNorm will let us have a significantly better grasp of the underlying complexities of neural network training and, in turn, will inform further algorithmic progress in this context. 

Our paper is organized as follows. In Section~\ref{sec:bn_ics}, we explore the connections between BatchNorm, optimization, and internal covariate shift. Then, in Section~\ref{sec:bn_works}, we demonstrate and analyze the exact roots of BatchNorm's success in deep neural network training. We present our theoretical analysis in Section~\ref{sec:ll_theory}. We discuss further related work in Section \ref{sec:related} and conclude in Section \ref{sec:conclusions}.

\section{Batch normalization and internal covariate shift}
\label{sec:bn_ics}
Batch normalization (BatchNorm)~\cite{IoffeS15} has been arguably one of the most successful architectural innovations in deep learning. 
But even though its effectiveness is indisputable, we do not have a firm understanding of why this is the case.

Broadly speaking, BatchNorm is a mechanism that aims to stabilize the distribution (over a mini-batch) of inputs to a given network layer during training.
This is achieved by augmenting the network with additional layers that set the first two moments (mean and variance) of the distribution of each activation to be zero and one respectively. 
Then, the batch normalized inputs are also typically scaled and shifted based on trainable parameters to preserve model expressivity. This normalization is applied before the non-linearity of the previous layer. 

One of the key motivations for the development of BatchNorm was the reduction of so-called \emph{internal covariate shift} (ICS). This reduction has been widely viewed as the root of BatchNorm's success. Ioffe and Szegedy~\cite{IoffeS15} describe ICS as the phenomenon wherein the distribution of inputs to a layer in the network changes due to an update of parameters of the previous layers. This change leads to a constant shift of the underlying training problem and is thus believed to have detrimental effect on the training process.

\begin{figure}[!thtp]
    \begin{center}
        \begin{tabular}{cc}
            
            \includegraphics[width=0.9\textwidth]{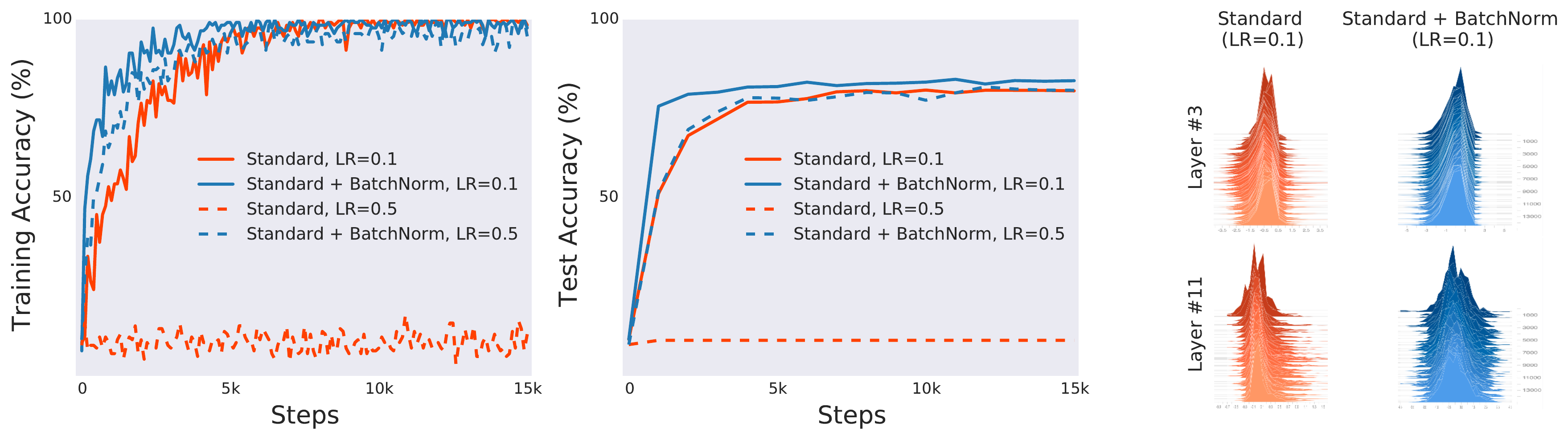}\\
            
        \end{tabular}
        \caption{Comparison of (a) training (optimization) and (b) test (generalization) performance of a standard VGG network trained on CIFAR-10 with and without BatchNorm (details in Appendix~\ref{app:setup}). There is a consistent gain in training speed in models with BatchNorm layers. (c) Even though the gap between the performance of the BatchNorm and non-BatchNorm networks is clear, the difference in the evolution of layer input distributions seems to be much less pronounced. (Here, we sampled activations of a given layer and visualized their distribution over training steps.)
        }
        \label{fig:basic}
    \vspace{-0.2in}
    \end{center}
\end{figure} 

Despite its fundamental role and widespread use in deep learning, the underpinnings of BatchNorm's success remain poorly understood~\cite{RahimiR17}. 
In this work we aim to address this gap.
To this end, we start by investigating the connection between ICS and BatchNorm. 
Specifically, we consider first training a standard VGG~\cite{SimonyanZ15} architecture on CIFAR-10~\cite{Krizhevsky09} with and without BatchNorm. As expected, Figures~\ref{fig:basic}(a) and (b) show a drastic improvement, both in terms of optimization and generalization performance, for networks trained with BatchNorm layers. Figure~\ref{fig:basic}(c) presents, however, a surprising finding. In this figure, we visualize to what extent BatchNorm is stabilizing distributions of layer inputs by plotting the distribution (over a batch) of a random input over training. Surprisingly, the difference in distributional stability (change in the mean and variance) in networks with and without BatchNorm layers seems to be marginal. This observation raises the following questions:
\begin{enumerate} \itemsep0em
    \item [(1)] \textit{Is the effectiveness of BatchNorm indeed related to internal covariate shift?}
    \item [(2)] \textit{Is BatchNorm's stabilization of layer input distributions even effective in reducing ICS?}
    
\end{enumerate}
We now explore these questions in more depth.
\begin{figure}[!b]
	\centering
	\includegraphics[width=0.8\textwidth]{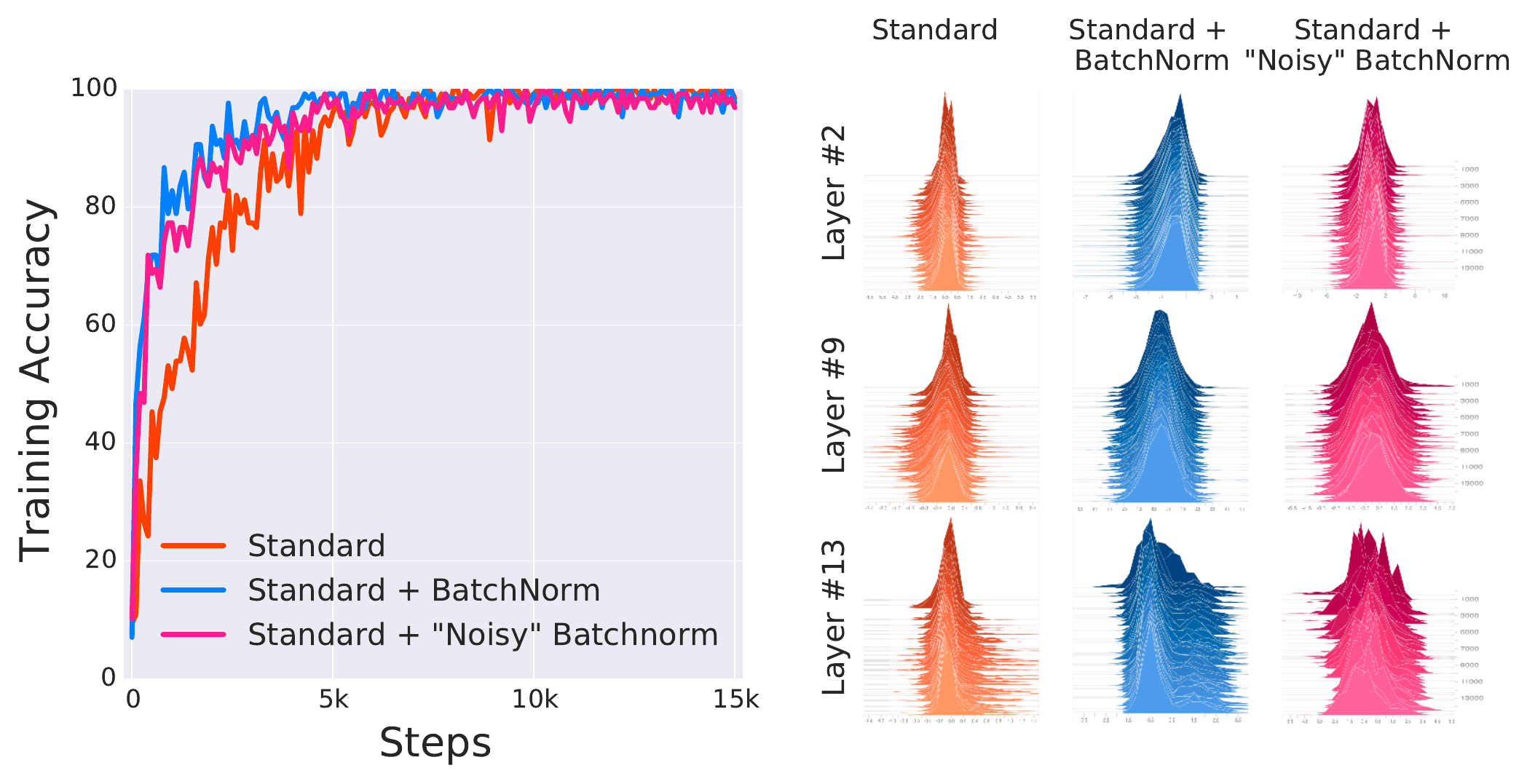}
	\caption{Connections between distributional stability and BatchNorm performance: We compare VGG networks trained without BatchNorm (Standard), with BatchNorm (Standard + BatchNorm) and with explicit ``covariate shift'' added to BatchNorm layers (Standard + ``Noisy'' BatchNorm). In the later case, we induce distributional instability by adding \emph{time-varying}, \emph{non-zero} mean and \emph{non-unit} variance noise independently to each batch normalized activation. The ``noisy'' BatchNorm model nearly matches the performance of standard BatchNorm model, despite complete distributional instability. We sampled activations of a given layer and visualized their distributions (also cf.~Figure~\ref{fig:dics}).}
	\label{fig:noise}
	\vspace{-0.15in}
\end{figure}

\subsection{Does BatchNorm's performance stem from controlling internal covariate shift?}
\label{sec:bn}
The central claim in \cite{IoffeS15} is that controlling the mean and variance of distributions of layer inputs is directly connected to improved training performance. Can we, however, substantiate this claim? 

We propose the following experiment. We train networks with {\em random} noise injected {\em after} BatchNorm layers. Specifically, we perturb each activation for each sample in the batch using i.i.d. noise sampled from a \emph{non-zero} mean and \emph{non-unit} variance distribution. We emphasize that this noise distribution \emph{changes} at each time step (see Appendix~\ref{app:setup} for implementation details). 

Note that such noise injection produces a severe covariate shift that skews activations at every time step. Consequently, every unit in the layer experiences a \emph{different} distribution of inputs at \emph{each} time step. We then measure the effect of this deliberately introduced distributional instability on BatchNorm's performance. Figure~\ref{fig:noise} visualizes the training behavior of standard, BatchNorm and our ``noisy'' BatchNorm networks. Distributions of activations over time from layers at the same depth in each one of the three networks are shown alongside. 
\begin{figure}[!t]
\begin{center}
	\begin{subfigure}[b]{0.95\textwidth}
		\includegraphics[width=\textwidth]{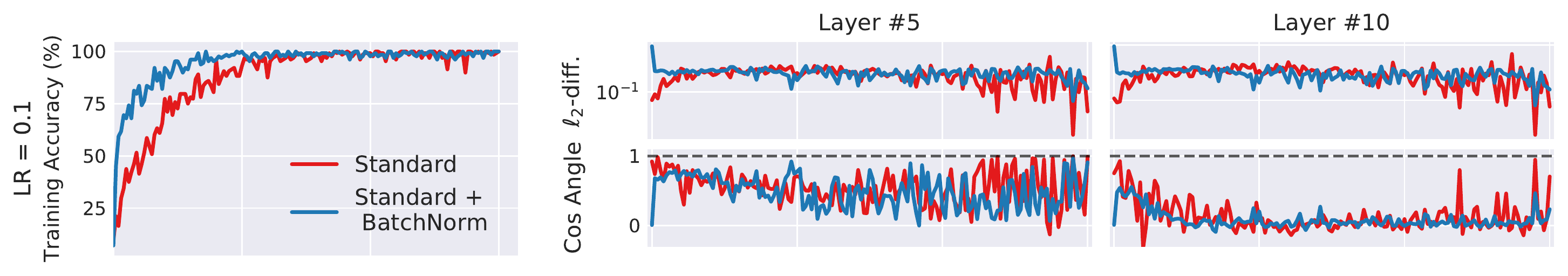} 
		\caption{VGG}
	\end{subfigure}
	\begin{subfigure}[b]{0.95\textwidth}
		\includegraphics[width=\textwidth]{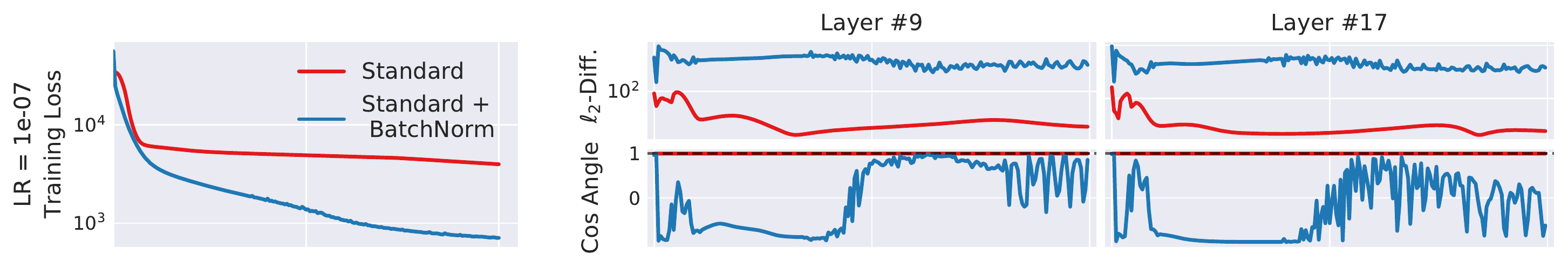} 
		\caption{DLN}
	\end{subfigure}
\end{center}
	\caption{Measurement of ICS (as defined in Definition~\ref{defn1}) in networks with and without BatchNorm layers. For a layer we measure the cosine angle (ideally $1$) and $\ell_2$-difference of the gradients (ideally $0$) before and after updates to the preceding layers (see Definition~\ref{defn1}). Models with BatchNorm have similar, or even worse, internal covariate shift, despite performing better in terms of accuracy and loss. (Stabilization of BatchNorm faster during training is an artifact of parameter convergence.)}
	\label{fig:ics}
	\vspace{-0.1in}
\end{figure}

Observe that the performance difference between models with BatchNorm layers, and  ``noisy'' BatchNorm layers is almost non-existent. Also, both these networks perform much better than standard networks. Moreover, the ``noisy'' BatchNorm network has qualitatively \emph{less stable} distributions than even the standard, non-BatchNorm network, yet it \emph{still performs better} in terms of training. 
To put the magnitude of the noise into perspective, we plot the mean and variance of random activations for select layers in Figure~\ref{fig:dics}. Moreover, adding the same amount of noise to the activations of the standard (non-BatchNorm) network prevents it from training entirely.

Clearly, these findings are hard to reconcile with the claim that the performance gain due to BatchNorm stems from increased stability of layer input distributions.

\subsection{Is BatchNorm reducing internal covariate shift?}
\label{sec:ics}

Our findings in Section~\ref{sec:bn} make it apparent that ICS is not directly connected to the training performance, at least if we tie ICS to stability of the mean and variance of input distributions. One might wonder, however: Is there a broader notion of internal covariate shift that {\em has} such a direct link to training performance? And if so, does BatchNorm indeed reduce this notion?

Recall that each layer can be seen as solving an empirical risk minimization problem where given a set of inputs, it is optimizing some loss function (that possibly involves later layers).
An update to the parameters of any previous layer will change these inputs, thus changing this empirical risk minimization problem itself.
This phenomenon is at the core of the intuition that Ioffe and Szegedy~\cite{IoffeS15} provide regarding internal covariate shift.
Specifically, they try to capture this phenomenon from the perspective of the resulting \emph{distributional} changes in layer inputs.
However, as demonstrated in Section~\ref{sec:bn}, this perspective does not seem to properly encapsulate the roots of BatchNorm's success.

To answer this question, we consider a broader notion of internal covariate shift that is more tied to the underlying optimization task.
(After all the success of BatchNorm is largely of an optimization nature.)
Since the training procedure is a first-order method, the gradient of the loss is the most natural object to study.
To quantify the extent to which the parameters in a layer would have to ``adjust'' in reaction to a parameter update in the previous layers, we measure the difference between the gradients of each layer before and after updates to all the previous layers.
This leads to the following definition.

\begin{definition}
	\label{defn1}
	Let $\loss$ be the loss, $W_1^{(t)}$, \ldots, $W_k^{(t)}$ be the parameters of each of the $k$ layers and $(x^{(t)},y^{(t)})$ be the batch of input-label pairs used to train the network at time $t$.
	We define  {\em internal covariate shift (ICS)} of activation $i$ at time $t$ to be the difference $||G_{t,i} - G_{t,i} '||_2$, where
	\begin{align*}
	G_{t,i} &= \nabla_{W_i^{(t)}} \loss(W_1^{(t)},\ldots,W_k^{(t)}; x^{(t)},y^{(t)}) \\
	G_{t,i}' &= \nabla_{W_i^{(t)}} \loss(W_1^{(t+1)},\ldots,W_{i-1}^{(t+1)},
	W_i^{(t)},
	W_{i+1}^{(t)},\ldots,W_k^{(t)}; x^{(t)},y^{(t)}).
	\end{align*}
\end{definition}
Here, $G_{t,i}$ corresponds to the gradient of the layer parameters that would be applied during a simultaneous update of all layers (as is typical). On the other hand, $G'_{t,i}$ is the same gradient \emph{after} all the previous layers have been updated with their new values.  
The difference between $G$ and $G'$ thus reflects the change in the optimization landscape of $W_i$ caused by the changes to its input.
It thus captures precisely the effect of cross-layer dependencies that could be problematic for training.

Equipped with this definition, we measure the extent of ICS with and without BatchNorm layers. 
To isolate the effect of non-linearities as well as gradient stochasticity, we also perform this analysis on (25-layer) deep linear networks (DLN) trained with full-batch gradient descent (see Appendix~\ref{app:setup} for details). 
The conventional understanding of BatchNorm suggests that the addition of BatchNorm layers in the network should increase the correlation between $G$ and $G'$, thereby reducing ICS.

Surprisingly, we observe that networks with BatchNorm often exhibit an \emph{increase} in ICS (cf. Figure~\ref{fig:ics}). This is particularly striking in the case of DLN. In fact, in this case, the standard network experiences almost no ICS for the entirety of training, whereas for BatchNorm it appears that $G$ and $G'$ are almost uncorrelated. 
We emphasize that this is the case \emph{even though BatchNorm networks continue to perform drastically better} in terms of attained accuracy and loss. 
(The stabilization of the BatchNorm VGG network later in training is an artifact of faster convergence.)
This evidence suggests that, from optimization point of view BatchNorm might not even reduce the internal covariate shift.

\section{Why does BatchNorm work?}
\label{sec:bn_works}
Our investigation so far demonstrated that the generally asserted link between the internal covariate shift (ICS) and the optimization performance is tenuous, at best. But BatchNorm \emph{does} significantly improve the training process. Can we explain why this is the case?

Aside from reducing ICS, Ioffe and Szegedy \cite{IoffeS15} identify a number of additional properties of BatchNorm.
These include prevention of exploding or vanishing gradients, robustness to different settings of hyperparameters such as learning rate and initialization scheme, and keeping most of the activations away from saturation regions of non-linearities. 
All these properties are clearly beneficial to the training process. But they are fairly simple consequences of the mechanics of BatchNorm and do little to uncover the underlying factors responsible for BatchNorm's success. \emph{Is there a more fundamental phenomenon at play here?}

\begin{figure}[!t]
	\begin{subfigure}[b]{0.3\textwidth}
		\centering
		\includegraphics[width=1.7in]{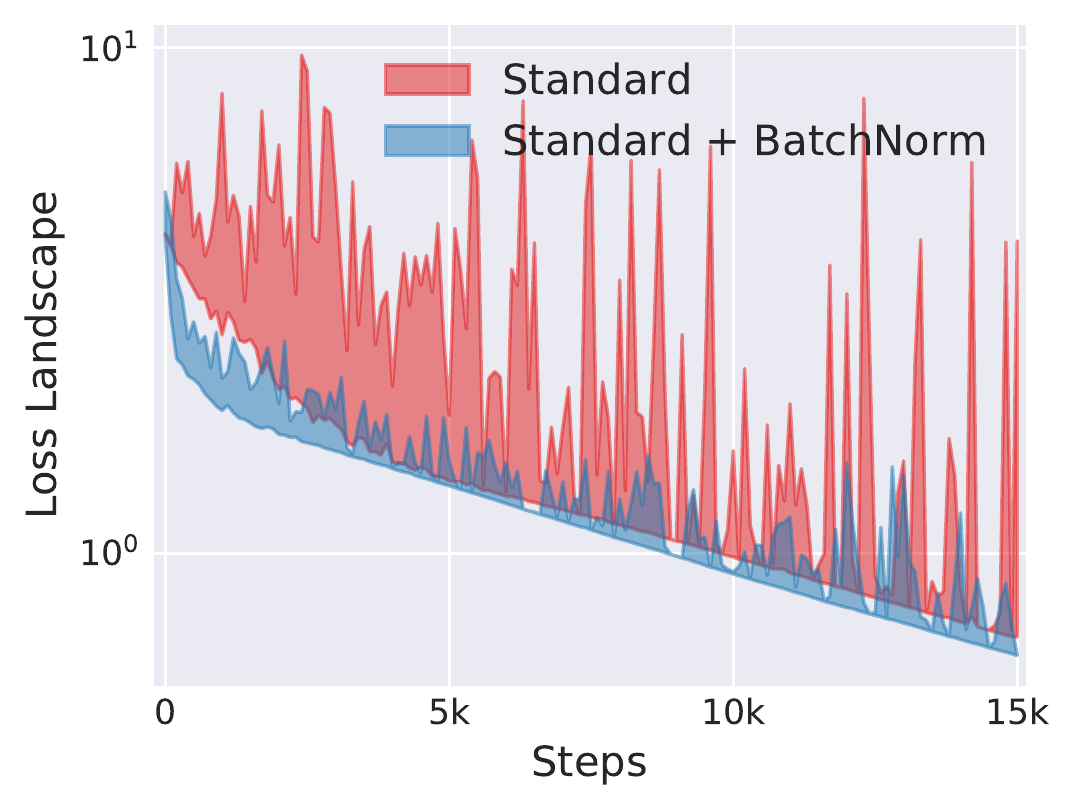} 
		\caption{loss landscape}
	\end{subfigure}
	\hfill
	\begin{subfigure}[b]{0.33\textwidth}
		\centering
		\includegraphics[width=1.7in]{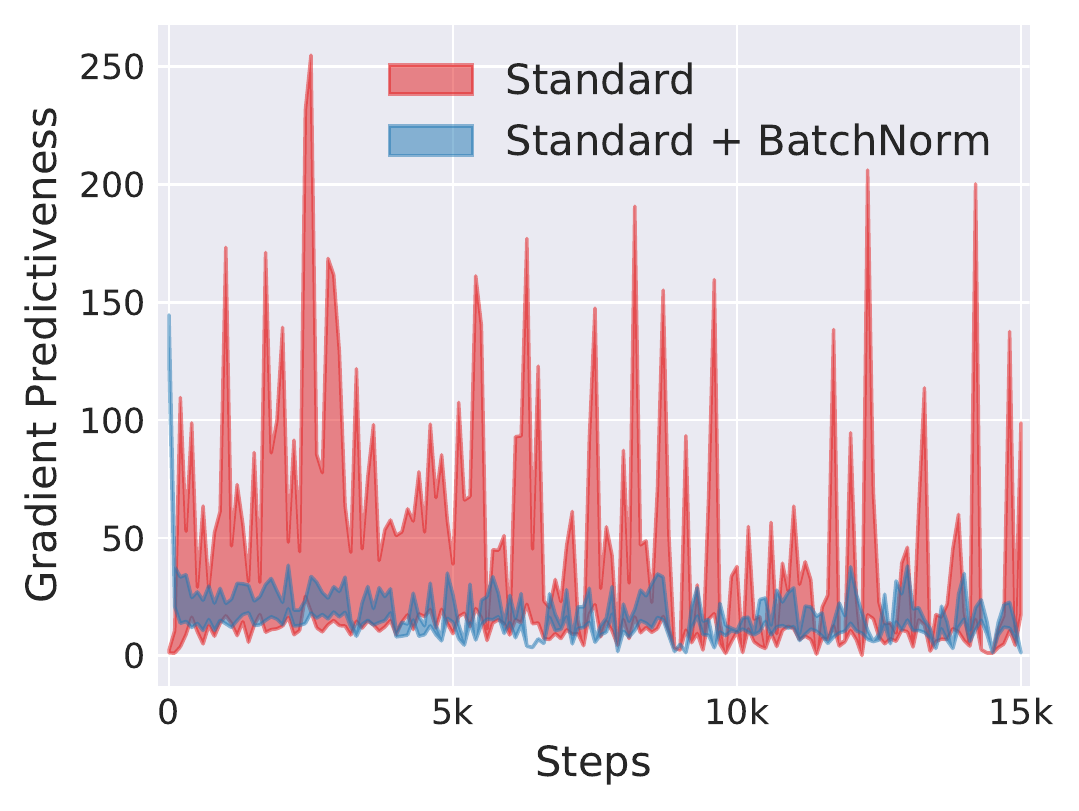} 
		\caption{gradient predictiveness}
	\end{subfigure}
	\hfill
	\begin{subfigure}[b]{0.3\textwidth}
		\centering
		\includegraphics[width=1.7in]{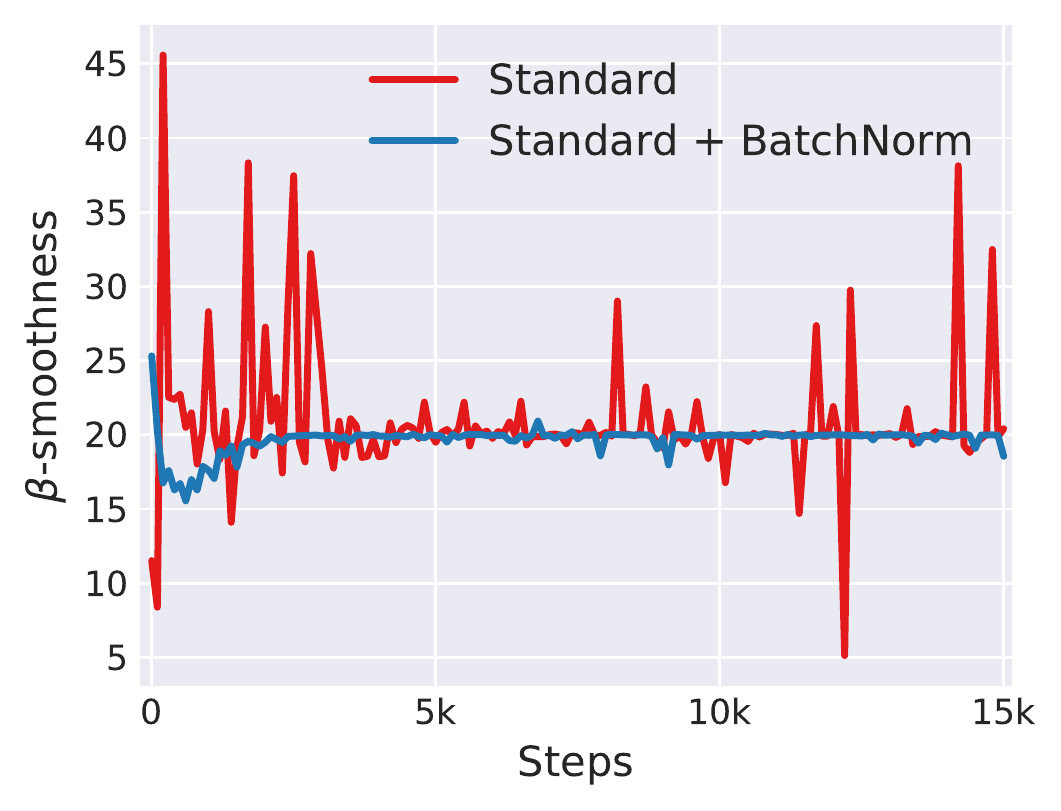} 
		\caption{``effective'' $\beta$-smoothness}
	\end{subfigure}
	\caption{Analysis of the optimization landscape of VGG networks.  
		At a particular training step, we measure the variation (shaded region) in loss (a) and $\ell_2$ changes in the gradient (b) as we move in the gradient direction.  
		The ``effective'' $\beta$-smoothness (c) refers to the maximum difference (in $\ell_2$-norm) in gradient over distance moved in that direction.
		There is a clear improvement in all of these measures in networks with BatchNorm, indicating a more well-behaved loss landscape.
		(Here, we cap the maximum distance to be $\eta=0.4\times$
	    the gradient since for larger steps the standard network just
	performs worse (see Figure~\ref{fig:basic}). BatchNorm however
    continues to provide smoothing for even larger distances.) Note that
these results are supported by our theoretical findings (Section~\ref{sec:ll_theory}).}
	\label{fig:landscape}
	\vspace{-0.1in}
\end{figure}
\subsection{The smoothing effect of BatchNorm}
\label{sec:ll}
Indeed, we identify the key impact that BatchNorm has on the training process: it reparametrizes the underlying optimization problem to {\em make its landscape significantly more smooth}. The first manifestation of this impact is improvement in the Lipschitzness\footnote{Recall that $f$ is $L$-Lipschitz if $|f(x_1) -f(x_2)| \leq L\|x_1-x_2\|$, for all $x_1$ and $x_2$.}
 of the loss function. That is, the loss changes at a smaller rate and the
 magnitudes of the gradients are smaller too. There is, however, an even
 stronger effect at play. Namely, BatchNorm's reparametrization makes {\em
 gradients} of the loss more Lipschitz too. In other words, the loss
 exhibits a significantly better ``effective''
 $\beta$-smoothness\footnote{Recall that $f$ is $\beta$-smooth if its
 gradient is $\beta$-Lipschitz. It is worth noting that, due to the existence of non-linearities, one should not expect the $\beta$-smoothness to be bounded in an absolute, global sense.}.

These smoothening effects impact the performance of the training algorithm in a major way. To understand why, recall that in a vanilla (non-BatchNorm), deep neural network, the loss function is not only non-convex but also tends to have a large number of ``kinks'', flat regions, and sharp minima~\cite{LiXTG17}.
This makes gradient descent--based training algorithms unstable, e.g., due to exploding or vanishing gradients, and thus highly sensitive to the choice of the learning rate and initialization. 

Now, the key implication of BatchNorm's reparametrization is that it makes the gradients more reliable and predictive.
After all, improved Lipschitzness of the gradients gives us confidence that when we take a larger step in a direction of a computed gradient, this gradient direction remains a fairly accurate estimate of the actual gradient direction after taking that step. 
It thus enables any (gradient--based) training algorithm to take larger steps without the danger of running into a sudden change of the loss landscape such as flat region (corresponding to vanishing gradient) or sharp local minimum (causing exploding gradients). This, in turn, enables us to use a broader range of (and thus larger) learning rates (see Figure~\ref{fig:gp} in Appendix~\ref{app:omitted_figures}) and, in general, makes the training significantly faster and less sensitive to hyperparameter choices. (This also illustrates how the properties of BatchNorm that we discussed earlier can be viewed as a manifestation of this smoothening effect.)

\subsection{Exploration of the optimization landscape}
\label{sec:ll_expt}
To demonstrate the impact of BatchNorm on the stability of the loss itself, i.e., its Lipschitzness, for each given step in the training process, we compute the gradient of the loss at that step and measure how the loss changes as we move in that direction -- see Figure~\ref{fig:landscape}(a). 
We see that, in contrast to the case when BatchNorm is in use, the loss of a vanilla, i.e., non-BatchNorm, network has a very wide range of values along the direction of the gradient, especially in the initial phases of training. (In the later stages, the network is already close to convergence.) 

Similarly, to illustrate the increase in the stability and predictiveness of the gradients, we make analogous measurements for the $\ell_2$ distance between the loss gradient at a given point of the training and the gradients corresponding to different points along the original gradient direction.  
Figure~\ref{fig:landscape}(b) shows a significant difference (close to two orders of magnitude) in such gradient predictiveness between the vanilla and BatchNorm networks, especially early in training.

To further demonstrate the effect of BatchNorm on the
stability/Lipschitzness of the gradients of the loss, we plot in
Figure~\ref{fig:landscape}(c) the ``effective'' $\beta$-smoothness of the
vanilla and BatchNorm networks throughout the training. (``Effective''
refers here to measuring the change of gradients as we move in the
direction of the gradients.). Again, we observe consistent differences
between these networks. 
We complement the above examination by considering \emph{linear} deep
networks: as shown in Figures~\ref{fig:landscape_dln} and~\ref{fig:norms_full_dln} in Appendix~\ref{app:omitted_figures}, the BatchNorm smoothening effect is present there as well.

Finally, we emphasize that even though our explorations were focused on the behavior of the loss along the gradient directions (as they are the crucial ones from the point of view of the training process), the loss behaves in a similar way when we examine other (random) directions too.

\subsection{Is BatchNorm the best (only?) way to smoothen the landscape?}
\label{sec:norm}

Given this newly acquired understanding of BatchNorm and the roots of its effectiveness, it is natural to wonder: {\em Is this smoothening effect a unique feature of BatchNorm?} Or could a similar effect be achieved using some other normalization schemes?

To answer this question, we study a few natural data statistics-based normalization strategies. Specifically, we study schemes that fix the first order moment of the activations, as BatchNorm does, and then normalizes them by the average of their $\ell_p$-norm (\emph{before} shifting the mean), for $p=1,2,\infty$. Note that for these normalization schemes, the distributions of layer inputs are no longer Gaussian-like (see Figure~\ref{fig:norms_hist}). Hence, normalization with such  $\ell_p$-norm does not guarantee anymore any control over the distribution moments nor distributional stability.

The results are presented in Figures~\ref{fig:norms},~\ref{fig:norms_full_vgg} and~\ref{fig:norms_full_dln} in Appendix~\ref{app:omitted_figures}. 
We observe that all the normalization strategies offer comparable performance to BatchNorm. 
In fact, for deep linear networks, $\ell_1$--normalization performs even better than BatchNorm. 
Note that, qualitatively, the $\ell_p$--normalization techniques lead to {\em larger distributional shift} (as considered in \cite{IoffeS15}) than the vanilla, i.e., unnormalized, networks, yet they still {\em yield improved optimization performance}. 
Also, all these techniques result in an improved smoothness of the landscape that is similar to the effect of BatchNorm. (See Figures~\ref{fig:norms_full_vgg} and \ref{fig:norms_full_dln} of Appendix~\ref{app:omitted_figures}.)
This suggests that the positive impact of BatchNorm on training might be somewhat serendipitous. 
Therefore, it might be valuable to perform a principled exploration of the design space of normalization schemes as it can lead to better performance.

\section{Theoretical Analysis}
\label{sec:ll_theory}
Our experiments so far suggest that BatchNorm has a fundamental effect on
the optimization landscape. We now explore this phenomenon from a theoretical perspective.
To this end, we consider an arbitrary linear layer in a DNN (we do not necessitate that the entire network be fully linear).

\begin{figure}
\begin{center}
\begin{tikzpicture}
\node at (0,1.4) {
    \includegraphics[clip,trim=0.0cm 2cm 0.0cm 6cm, width=0.45\textwidth]{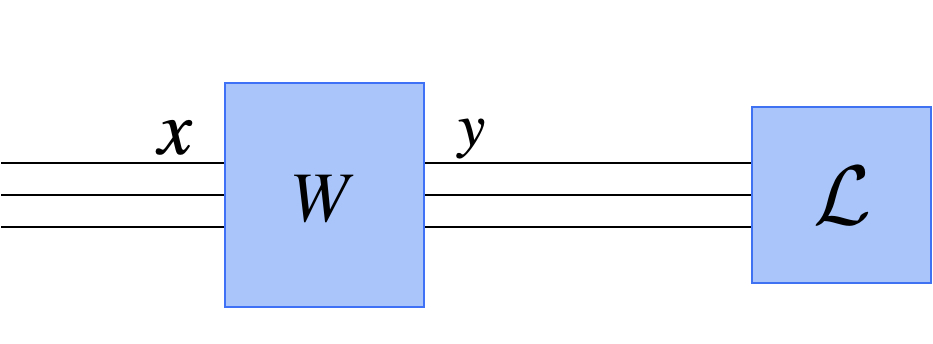}
};
\node at (0,0) {
    (a) Vanilla Network
};
\node at (7,1.4) {
	\includegraphics[clip, trim=0cm 2cm 0cm 6cm, width=0.45\textwidth]{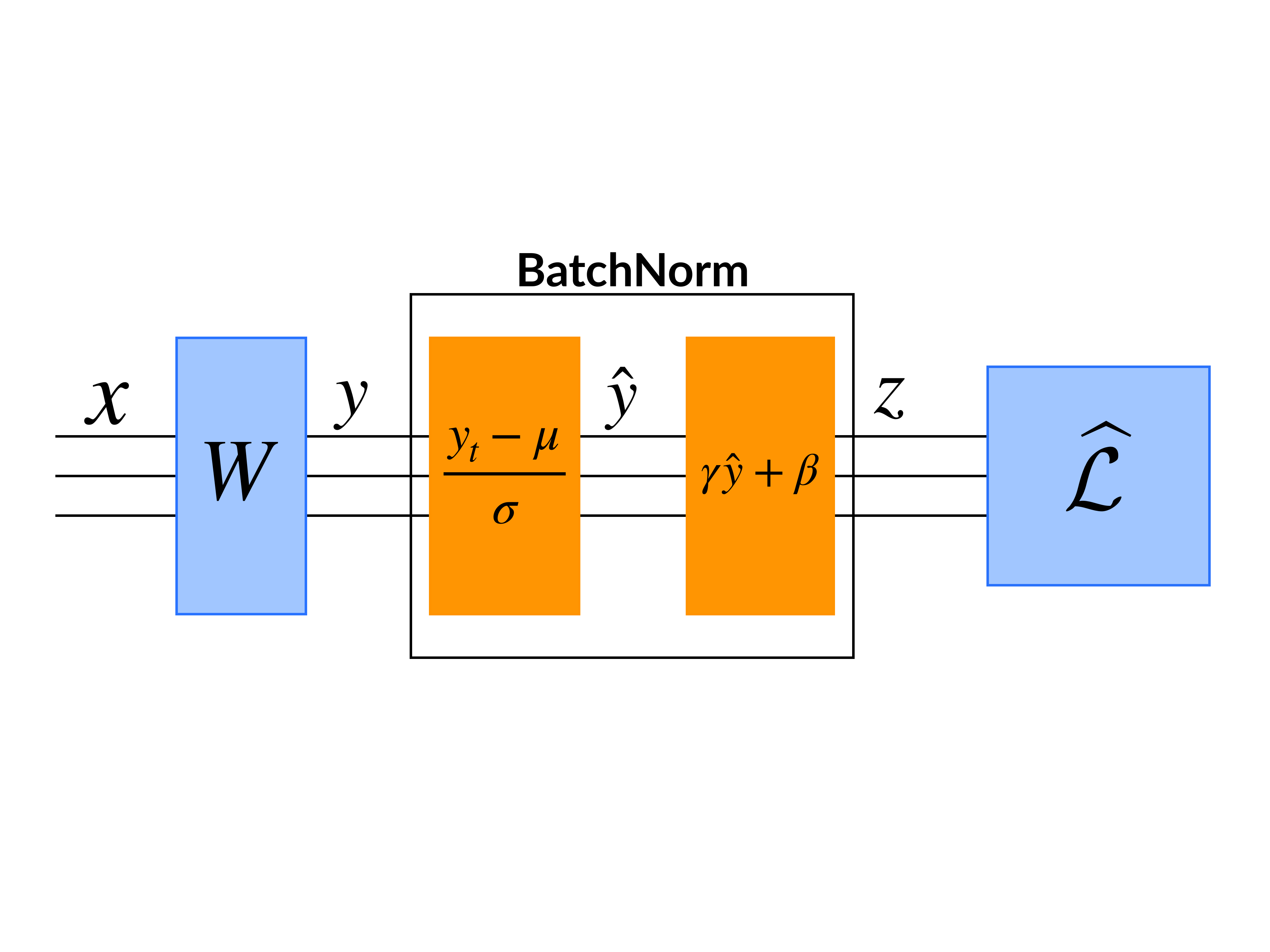}
};
\node at (7,0) {
    (b) Vanilla Network + BatchNorm Layer
};
\draw[line width=1.5pt] (3.45, -.2) -- (3.45, 2.55);
\end{tikzpicture}
\end{center}
 \caption{The two network architectures we compare in our theoretical
 analysis: (a) the vanilla DNN (no BatchNorm layer); (b) the same network
 as in (a) but with a BatchNorm layer inserted after the fully-connected layer $W$. (All the layer parameters have exactly the same value in both networks.)}
 \label{fig:sketch}
\end{figure}

\subsection{Setup}
\label{sec:bn_theory_setup}
We analyze the impact of adding a single BatchNorm layer after
an arbitrary fully-connected layer $W$ at a given step during the training.
Specifically, we compare the optimization landscape of the original training problem to
the one that results from inserting the BatchNorm layer {\em after} the fully-connected layer -- normalizing the output of this layer (see Figure \ref{fig:sketch}).
Our analysis therefore captures effects that stem from the reparametrization of the landscape and not merely from normalizing the inputs $x$. 

We denote the layer weights (identical for both the standard and batch-normalized networks) as $W_{ij}$.
Both networks have the same arbitrary loss function $\L$ that could potentially include a number of additional non-linear layers after the current one.
We refer to the loss of the normalized network as $\hL$ for clarity.
In both
networks, we have input $\inp$, and let $\l = W\inp$. For networks with BatchNorm,
we have an additional set of activations $\bn$, which are the ``whitened''
version of $\l$, i.e. standardized to mean 0 and variance 1. These are then
multiplied by $\gamma$ and added to $\beta$ to form $z$.
We assume $\beta$ and $\gamma$ to be constants for our analysis.
In terms of notation, we let $\sigma_j$ denote the standard deviation
(computed over the mini-batch) of a batch of outputs $\l_j \in
\mathbb{R}^m$.

\subsection{Theoretical Results}
We begin by considering the optimization landscape with respect to the
activations $\bm{\l_j}$. We show that batch normalization causes this
landscape to be more well-behaved, inducing favourable
properties in Lipschitz-continuity, and predictability of the
gradients. We then show that these improvements in the activation-space landscape translate to favorable worst-case bounds in the weight-space landscape.

We first turn our attention to the gradient magnitude
$\norm{\nabla_{\bm{\l_j}} \loss}$, which
captures the Lipschitzness of the loss.
The Lipschitz constant of the loss
plays a crucial role in optimization, since it controls the amount by which the loss can change when taking a step (see~\cite{Nesterov14} for details).
Without any assumptions on the specific weights or the loss being used, we show that
the batch-normalized landscape exhibits a better Lipschitz constant.
Moreover, the Lipschitz constant is significantly reduced
whenever the activations $\bm{\bn_j}$ correlate with
the gradient $\nabla_{\bmbn}\hL$ or the mean of the gradient deviates from
$0$. Note that this reduction is additive,
and has effect even when the scaling of BN is identical to the original
layer scaling (i.e. even when $\sigma_j = \gamma$).

\begin{restatable}[The effect of BatchNorm on the Lipschitzness of the
    loss]{thm}{lip}
    \label{thm:lip}
    For a BatchNorm network with loss $\hL$ and an identical non-BN network with (identical) loss $\L$,
    $$\norm{\nabla_{\bm{\l_j}}\hL}^2 \leq
    \frac{\gamma^2}{\sigma_j^2}\left(\norm{\nabla_{\bm{\l_j}}\L}^2 -
	\frac{1}{m}\inner{\bm{1}}{\nabla_{\bm{\l_j}}\L}^2 -
    \frac{1}{m}\inner{\nabla_{\bm{\l_j}}\L}{\bm{\bn}_j}^2\right).$$
\end{restatable}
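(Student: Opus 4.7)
The plan is to apply the chain rule through the BatchNorm layer and then recognize the resulting expression as a Pythagorean decomposition. Fix a training step and let $\mathbf{g} \coloneqq \nabla_{\bm{y}_j}\mathcal{L}$. Since the two networks share all downstream components, this $\mathbf{g}$ also plays the role of the backpropagated signal arriving at the BN layer's output in the BN network (so we may identify it with $\nabla_{\bm{z}_j}\hat{\mathcal{L}}$). The goal is to express $\nabla_{\bm{y}_j}\hat{\mathcal{L}}$ in terms of $\mathbf{g}$, $\hat{\bm{y}}_j$, $\sigma_j$, and $\gamma$ using the BatchNorm map $z_{j,b} = \gamma(y_{j,b}-\mu_j)/\sigma_j + \beta$.

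The main computation is the Jacobian $\partial z_{j,b}/\partial y_{j,c}$. Using $\partial \mu_j/\partial y_{j,c}=1/m$ together with $\partial \sigma_j/\partial y_{j,c}=\hat{y}_{j,c}/m$ (the latter obtained by differentiating $m\sigma_j^2=\sum_b(y_{j,b}-\mu_j)^2$ and using $\sum_b(y_{j,b}-\mu_j)=0$ to kill the cross-term), the Jacobian collapses to
\[
\frac{\partial z_{j,b}}{\partial y_{j,c}} = \frac{\gamma}{\sigma_j}\left(\delta_{bc} - \frac{1}{m} - \frac{\hat{y}_{j,b}\hat{y}_{j,c}}{m}\right),
\]
and contracting with $\mathbf{g}$ under the chain rule yields
\[
\nabla_{\bm{y}_j}\hat{\mathcal{L}} = \frac{\gamma}{\sigma_j}\left(\mathbf{g} - \tfrac{\langle \bm{1},\mathbf{g}\rangle}{m}\bm{1} - \tfrac{\langle \hat{\bm{y}}_j,\mathbf{g}\rangle}{m}\hat{\bm{y}}_j\right).
\]

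Next, I would observe that $\bm{1}$ and $\hat{\bm{y}}_j$ are orthogonal and each has squared norm $m$: orthogonality follows from $\sum_b \hat{y}_{j,b}=0$ (BN zero-mean), and $\|\hat{\bm{y}}_j\|^2=m$ follows from BN unit-variance. Hence the two correction terms subtracted from $\mathbf{g}$ are precisely the orthogonal projections of $\mathbf{g}$ onto these two directions, and the Pythagorean theorem gives
\[
\left\|\mathbf{g} - \tfrac{\langle\bm{1},\mathbf{g}\rangle}{m}\bm{1} - \tfrac{\langle\hat{\bm{y}}_j,\mathbf{g}\rangle}{m}\hat{\bm{y}}_j\right\|^2 = \|\mathbf{g}\|^2 - \tfrac{1}{m}\langle\bm{1},\mathbf{g}\rangle^2 - \tfrac{1}{m}\langle\hat{\bm{y}}_j,\mathbf{g}\rangle^2.
\]
Multiplying through by $\gamma^2/\sigma_j^2$ reproduces the theorem's RHS exactly, so the claimed bound holds (as an equality, in fact).

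The main obstacle is the Jacobian calculation: because $\mu_j$ and $\sigma_j$ both depend on all $m$ entries of $\bm{y}_j$, the chain rule requires careful index bookkeeping, and one must keep track of how the dependence of $\sigma_j$ on each $y_{j,c}$ produces the outer-product term. The payoff is the structural observation that the resulting rank-two correction $\tfrac{1}{m}(\bm{1}\bm{1}^\top + \hat{\bm{y}}_j\hat{\bm{y}}_j^\top)$ is exactly the orthogonal projector onto $\mathrm{span}(\bm{1},\hat{\bm{y}}_j)$; once this is noticed, Pythagoras finishes the job in one line.
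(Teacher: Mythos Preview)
Your proposal is correct and follows essentially the same route as the paper: compute the BatchNorm Jacobian (the paper packages this as a ``Fact''), write $\nabla_{\bm{y}_j}\hat{\mathcal{L}}$ as $\frac{\gamma}{\sigma_j}$ times $\mathbf{g}$ minus its projections onto $\bm{1}$ and $\hat{\bm{y}}_j$, and then use orthogonality of $\bm{1}$ and $\hat{\bm{y}}_j$ together with $\|\bm{1}\|^2=\|\hat{\bm{y}}_j\|^2=m$ to evaluate the squared norm. As you note, the derivation in fact yields an equality; the paper also derives equality in the proof body but states the theorem with $\leq$.
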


First, note that $\langle \bm{1}, \partial L / \partial y \rangle^2$ grows quadratically in the dimension, so the
middle term above is significant. Furthermore, the final inner product term 
is expected to be bounded away from zero, as the gradient with respect to a variable is rarely uncorrelated to the variable itself.
In addition to the additive reduction, $\sigma_j$
tends to be large  in practice
(cf.~Appendix Figure~\ref{fig:var}), and thus the scaling by
$\frac{\gamma}{\sigma}$ may contribute to the relative ``flatness" we see in the
effective Lipschitz constant.

We now turn our attention to the second-order properties of the landscape.
We show that when a BatchNorm layer is added, the quadratic form of the loss Hessian with respect to the activations in the gradient direction, is both rescaled by the input variance (inducing resilience to mini-batch variance), and decreased by an additive factor (increasing smoothness).
This term captures the second order term of the Taylor expansion of the gradient around the current point.
Therefore, reducing this term implies that the first order term (the gradient) is more predictive.

\begin{restatable}[The effect of BN to smoothness]{thm}{thmbeta}
    Let $\gh = \nabla_{\bm{\l_j}}\L$ 
    and $\Hess[jj] = \d{\L}{\bm{\l_j} \partial \bm{\l_j}}$ be the gradient and Hessian of the loss
    with respect to the layer outputs respectively. Then
\begin{small}
\[
    \left(\nabla_{\bm{\l_j}}\hL\right)^\top
    \d{\hL}{\bm{\l_j} \partial \bm{\l_j}}
    \left(\nabla_{\bm{\l_j}}\hL\right)
    \leq 
    \frac{\gamma^2}{\sigma^2} \left(\d{\hL}{\bm{\l_j}}\right)^\top
    \Hess[jj] 
    \left(\d{\hL}{\bm{\l_j}}\right) -
    \frac{\gamma}{m\sigma^2}\inner{\gh}{\bmbn}\norm{\d{\hL}{\bm{\l_j}}}^2
\]
\end{small}
If we also have that the $\Hess[jj]$ preserves the relative norms of $\gh$ and $\nabla_{\bm{\l_j}}\hL$,
\begin{small}
\[
    \left(\nabla_{\bm{\l_j}}\hL\right)^\top
    \d{\hL}{\bm{\l_j} \partial \bm{\l_j}}
    \left(\nabla_{\bm{\l_j}}\hL\right)
    \leq 
    \frac{\gamma^2}{\sigma^2}\left(
	\gh^\top \Hess[jj] \gh - 
	\frac{1}{m\gamma}\inner{\gh}{\bmbn}\norm{\d{\hL}{\bm{\l_j}}}^2
    \right)
\]
\end{small}
\end{restatable}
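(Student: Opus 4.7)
The plan is to compute $\nabla^2_{\bm{\l_j}}\hL$ by chain rule through the BatchNorm map and contract against $w := \nabla_{\bm{\l_j}}\hL$. First, I would observe that the Jacobian of $\bm{\l_j}\mapsto\bm{z}_j$ factorizes as $J = (\gamma/\sigma_j)P$ with $P = I - \tfrac{1}{m}\bm{1}\bm{1}^\top - \tfrac{1}{m}\bmbn\bmbn^\top$. Using $\bm{1}^\top\bmbn = 0$ and $\|\bmbn\|^2 = m$ (both immediate from the BatchNorm normalization), $P$ turns out to be the orthogonal projection onto $\{\bm{1},\bmbn\}^\perp$: in particular $P^2 = P$, $P\bm{1} = 0$, and $P\bmbn = 0$. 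An immediate consequence is that $w = (\gamma/\sigma_j) P\gh$ lies in $\{\bm{1},\bmbn\}^\perp$, and $Jw = (\gamma/\sigma_j)w$. This orthogonality is what drives essentially every cancellation in what follows.

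\medskip

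Next I would apply the chain rule to the Hessian,
\[
    \nabla^2_{\bm{\l_j}}\hL \;=\; J^\top\Hess[jj] J \;+\; \sum_{c=1}^m \gh_c\,\nabla^2_{\bm{\l_j}} z_j^{(c)},
\]
and evaluate $w^\top(\cdot)\,w$. The first summand is immediate: using $Jw = (\gamma/\sigma_j)w$, one has $w^\top J^\top\Hess[jj] J w = (\gamma^2/\sigma_j^2)\,w^\top\Hess[jj] w$, which is exactly the first term on the right-hand side of the theorem. For the second summand, I would differentiate the backprop identity $\partial \hat y_j^{(c)}/\partial y_j^{(b)} = \sigma_j^{-1}(\mathbb{1}[b{=}c] - 1/m - \hat y_j^{(c)}\hat y_j^{(b)}/m)$ once more, using $\partial\sigma_j/\partial y_j^{(a)} = \hat y_j^{(a)}/m$. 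The resulting Hessian $\nabla^2_{\bm{\l_j}}\hat y_j^{(c)}$ is a sum of rank-one pieces involving $\bm{1}\bm{1}^\top$, $\bm{1}\bmbn^\top + \bmbn\bm{1}^\top$, and $\bmbn\bmbn^\top$ (each scaled by some function of $\hat y_j^{(c)}$), plus a diagonal piece from the $\mathbb{1}[a{=}b]$ indicator. When contracted with $w$ on both sides, every rank-one piece produces a factor $\inner{w}{\bm{1}}$ or $\inner{w}{\bmbn}$, all of which vanish; only the diagonal term survives and yields $w^\top \nabla^2_{\bm{\l_j}}\hat y_j^{(c)}\, w = -\hat y_j^{(c)}\|w\|^2/(m\sigma_j^2)$. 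Multiplying by $\gamma\gh_c$ and summing over $c$ gives exactly $-(\gamma/(m\sigma_j^2))\inner{\gh}{\bmbn}\|w\|^2$, completing the first inequality.

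\medskip

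For the second inequality, the hypothesis that $\Hess[jj]$ preserves the relative norms of $\gh$ and $\nabla_{\bm{\l_j}}\hL$ lets me replace $w^\top\Hess[jj] w$ by $\gh^\top\Hess[jj]\gh$: since $\|w\|^2 \le (\gamma/\sigma_j)^2\|\gh\|^2$ (from $P$ being a projection), the prefactor $\gamma^2/\sigma_j^2$ absorbs the swap, and the additive correction $-(\gamma/(m\sigma_j^2))\inner{\gh}{\bmbn}\|w\|^2$ is algebraically identical to the $-(\gamma^2/\sigma_j^2)(1/(m\gamma))\inner{\gh}{\bmbn}\|w\|^2$ written in the theorem's second form. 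The real obstacle is the bookkeeping in step two: the Hessian of $\hat y_j^{(c)}$ produces a proliferating list of terms, and recognizing $w\perp\bm{1}$ and $w\perp\bmbn$ is precisely what causes all but the diagonal piece to collapse---without that observation the algebra remains elementary but much longer.
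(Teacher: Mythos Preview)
Your proposal is correct and, at its core, follows the same route as the paper: both compute the Hessian of $\hL$ in $\bm{\l_j}$ via the BatchNorm map, contract against $w=\nabla_{\bm{\l_j}}\hL$, and exploit the orthogonality $w\perp\bm{1}$, $w\perp\bmbn$ together with $P^2=P$ to reduce to the two displayed terms (in fact both arguments yield equality in the first display, not merely $\leq$). The organizational difference is that the paper first expands the full matrix $\d{\hL}{\bm{\l_j}\partial\bm{\l_j}}$ explicitly and only then contracts and cancels, whereas you apply the chain-rule split $J^\top\Hess J+\sum_c\gh_c\,\nabla^2 z_j^{(c)}$ and contract each piece immediately, so the orthogonality kills the rank-one debris before it ever proliferates; your route is shorter for exactly the reason you identify. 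For the second inequality, your justification (``the prefactor $\gamma^2/\sigma_j^2$ absorbs the swap'') is a bit loose---what is actually needed is $w^\top\Hess w\le\gh^\top\Hess\gh$, and the prefactor plays no role in that step---but the paper is equally informal here, simply asserting the conclusion is ``trivial'' once one reads $w^\top\Hess w$ as $\|w\|_{\Hess}^2$ under the (undefined) relative-norm hypothesis.
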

Note that if the quadratic forms involving the Hessian and the inner product $\inner{\bmbn}{\gh}$ are non-negative (both fairly mild assumptions), the theorem implies more predictive gradients.
The Hessian is positive semi-definite (PSD) if the loss is locally convex which is true for the case of deep networks with piecewise-linear activation functions and a convex loss at the final layer (e.g. standard softmax cross-entropy loss or other common losses).
The condition $\inner{\bmbn}{\gh} >0$ holds as long as the negative gradient $\gh$ is pointing towards the minimum of the loss (w.r.t.\ normalized activations).
Overall, as long as these two conditions hold, the steps taken by the BatchNorm network are more predictive than those of the standard network
(similarly to what we observed experimentally).

Note that our results stem from the reparametrization of the problem and not a simple scaling.
\begin{restatable}[BatchNorm does more than rescaling]{obs}{rescale}
    For any input data $X$ and network configuration $W$, there exists a
    BN configuration $(W, \gamma, \beta)$ 
    that results in the same activations $\l_j$, and where $\gamma =
    \sigma_j$. Consequently, all of the minima of the normal landscape are
    preserved in the BN landscape.
\end{restatable}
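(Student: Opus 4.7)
The plan is to explicitly construct the promised BN configuration from mini-batch statistics and then read off the minima corollary via an embedding argument. For a fixed input batch $X$ and weights $W$, I would denote the fully-connected layer's output by $y_j = W_j X$ and let $\mu_j, \sigma_j$ be the mini-batch mean and standard deviation of coordinate $j$. The BN layer produces $z_j = \gamma(y_j - \mu_j)/\sigma_j + \beta$, while the vanilla network produces $y_j$ directly.

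Next I would match coefficients to force $z_j \equiv y_j$ on the batch: the choice $\gamma = \sigma_j$ and $\beta = \mu_j$ gives
\[
z_j \;=\; \sigma_j \cdot \frac{y_j - \mu_j}{\sigma_j} + \mu_j \;=\; y_j,
\]
so the BN activations coincide with the vanilla activations on every sample in the mini-batch. Because every downstream layer and the loss function are identical in the two architectures, the losses themselves also coincide: $\widehat{\mathcal{L}}(W, \sigma_j(W,X), \mu_j(W,X)) = \mathcal{L}(W)$. This establishes the existence claim, with the required $\gamma = \sigma_j$.

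For the minima corollary, I would observe that $W \mapsto (W, \sigma_j(W,X), \mu_j(W,X))$ is a loss-preserving embedding of the vanilla parameter space into the augmented BN parameter space, so every minimizer $W^\star$ of $\mathcal{L}$ corresponds to a point in the BN landscape attaining the identical loss $\mathcal{L}(W^\star)$. The main subtlety, and the only place one could overclaim, is the meaning of ``minima preserved'': since $(\gamma, \beta)$ give strictly more trainable freedom, the embedded images of vanilla minima need not be strict critical points of $\widehat{\mathcal{L}}$ in the larger parameter space. I would therefore read the statement as ``every minimum loss value of $\mathcal{L}$ remains attainable by $\widehat{\mathcal{L}}$''---which is exactly what the embedding delivers and which suffices to rule out that BN's reparametrization has destroyed any of the original optima.
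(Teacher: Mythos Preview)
Your construction is correct and is exactly the intended one-line argument: setting $\gamma=\sigma_j$ and $\beta=\mu_j$ makes $z_j=\sigma_j\cdot(y_j-\mu_j)/\sigma_j+\mu_j=y_j$, so the BN layer acts as the identity on this batch and the two networks produce identical activations (hence identical loss). Note that the paper itself does not supply a separate proof of this observation---it is stated as self-evident---so there is no alternative argument to compare against; your embedding discussion and the caveat about what ``minima preserved'' can legitimately mean are both sound and, if anything, more careful than what the paper spells out.
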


Our theoretical analysis so far studied the optimization landscape of the loss w.r.t.\ the normalized activations.
We will now translate these bounds to a favorable worst-case bound on the landscape with respect to layer weights.
Note that a (near exact) analogue of this theorem for minimax gradient
predictiveness appears in Theorem~\ref{thm:minmax-smoothness} of Appendix~\ref{app:proofs}.

\begin{restatable}[Minimax bound on weight-space
    Lipschitzness]{thm}{minmaxlip}
    For a BatchNorm network with loss $\hL$ and an identical non-BN network
    (with identical loss $\L$), if
    \begin{small}
	$$g_j = \max_{||X|| \leq \lambda}\norm{\nabla_{W}\L}^2,\qquad
    \hat{g}_j = \max_{||X|| \leq \lambda}\norm{\nabla_{W}\hL}^2 
    \implies 
     \hat{g}_j \leq \frac{\gamma^2}{\sigma_j^2}\left(g_j^2 - m\mu_{g_j}^2
	- \lambda^2 \inner{\nabla_{\bm{\l}_j}\L}{\bm{\bn}_j}^2\right).$$
    \end{small}
\end{restatable}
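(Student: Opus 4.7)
The plan is to lift the activation-space Lipschitz bound of Theorem~\ref{thm:lip} to a worst-case bound in weight-space via the chain rule. For a fully-connected layer with $\bm{\l_j} = W_j X$, the weight-space gradient factors as $\nabla_{W_j}\L = (\nabla_{\bm{\l_j}}\L)\, X^{\top}$, so $\norm{\nabla_{W_j}\L}^2 = \norm{\nabla_{\bm{\l_j}}\L}^2 \cdot \norm{X}^2$; the same factorization holds for $\nabla_{W_j}\hL$ in the BatchNorm network. Both factorizations use the \emph{same} $X$, which is what allows the activation-space inequality to transfer cleanly to weight-space.

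The first step is to fix an arbitrary $X$ with $\norm{X}\leq\lambda$, apply Theorem~\ref{thm:lip} pointwise to bound $\norm{\nabla_{\bm{\l_j}}\hL}^2$, and multiply both sides by $\norm{X}^2$. The chain-rule factorization converts the leading terms on each side to their weight-space analogues, giving
\[
\norm{\nabla_{W_j}\hL}^2 \leq \frac{\gamma^2}{\sigma_j^2}\left(\norm{\nabla_{W_j}\L}^2 - \frac{\norm{X}^2}{m}\inner{\bm{1}}{\nabla_{\bm{\l_j}}\L}^2 - \frac{\norm{X}^2}{m}\inner{\nabla_{\bm{\l_j}}\L}{\bm{\bn}_j}^2\right).
\]
The second step is to take suprema over $\norm{X}\leq\lambda$. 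The left-hand side becomes $\hat g_j$ by definition. On the right, the leading term is bounded uniformly by $g_j$, the factor $\norm{X}^2$ in the final inner-product term is bounded by $\lambda^2$, and the mean-gradient penalty is repackaged as $m\mu_{g_j}^2$ by identifying $\mu_{g_j}$ with the weight-space mean gradient, which via the chain rule is proportional to $\frac{1}{m}\inner{\bm{1}}{\nabla_{\bm{\l_j}}\L}$.

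\textbf{Main obstacle.} The principal subtlety is that the supremum must be realized by the \emph{same} $X$ in every summand: one cannot optimize the first term and the subtracted penalty terms independently. The clean way around this is to evaluate the pointwise inequality at the input $X^{\star}$ that maximizes $\norm{\nabla_{W_j}\hL}^2$, upper-bound $\norm{\nabla_{W_j}\L(X^{\star})}^2$ by $g_j$, upper-bound $\norm{X^{\star}}^2$ by $\lambda^2$ in the last term, and retain the two inner products exactly as they are evaluated at $X^{\star}$ — these remain non-negative quadratic quantities, so the sign structure of the bound is preserved. Matching the paper's notation ($\mu_{g_j}$ and the activation-space inner product $\inner{\nabla_{\bm{\l_j}}\L}{\bm{\bn}_j}$) to the expressions produced by the chain rule is then bookkeeping.
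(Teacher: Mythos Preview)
Your central step is incorrect. You write $\nabla_{W_j}\L = (\nabla_{\bm{\l_j}}\L)\,X^\top$ and conclude $\norm{\nabla_{W_j}\L}^2 = \norm{\nabla_{\bm{\l_j}}\L}^2\cdot\norm{X}^2$. Over a mini-batch, however, $X\in\mathbb{R}^{m\times d}$ is a \emph{matrix} and $\nabla_{\bm{\l_j}}\L\in\mathbb{R}^m$ is a vector, so the weight-gradient norm is the quadratic form
\[
\norm{\nabla_{W_{\cdot j}}\L}^2 \;=\; \bigl(\nabla_{\bm{\l_j}}\L\bigr)^\top X X^\top \bigl(\nabla_{\bm{\l_j}}\L\bigr),
\]
which is \emph{not} a product of norms. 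In general you only get $\norm{\nabla_{W_{\cdot j}}\L}^2 \le \norm{X}^2\,\norm{\nabla_{\bm{\l_j}}\L}^2$. That inequality goes the wrong way for your purpose: after multiplying Theorem~\ref{thm:lip} by $\norm{X}^2$, you cannot replace $\norm{X}^2\norm{\nabla_{\bm{\l_j}}\L}^2$ on the right-hand side by the smaller quantity $\norm{\nabla_{W_{\cdot j}}\L}^2$ and still preserve an upper bound. Your displayed ``pointwise inequality'' therefore does not follow.

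The paper's argument avoids this by working directly with the quadratic form and the Rayleigh-quotient characterization of the top singular value. On the BatchNorm side one uses the inequality $\norm{\nabla_{W_{\cdot j}}\hL}^2 \le \lambda^2\norm{\nabla_{\bm{\l_j}}\hL}^2$, then applies Theorem~\ref{thm:lip}. The key point you are missing is that on the \emph{non}-BN side the maximum is actually \emph{attained}: one can choose $X$ with $\norm{X}\le\lambda$ so that $XX^\top$ has $\nabla_{\bm{\l_j}}\L$ as a top eigenvector with eigenvalue $\lambda^2$, giving the exact identity $g_j=\lambda^2\norm{\nabla_{\bm{\l_j}}\L}^2$. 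It is this equality that allows $\lambda^2\norm{\nabla_{\bm{\l_j}}\L}^2$ to be substituted by $g_j$ on the right. Because $\nabla_{\bm{\l_j}}\L$, $\nabla_{\bm{\l_j}}\hL$, and $\bm{\bn}_j$ are treated as fixed (they are the upstream quantities), there is in fact no ``same $X$ in every summand'' obstacle to worry about; the maximization over $X$ enters only through the single $XX^\top$ factor.
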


Finally, in addition to a desirable landscape, we find that BN also offers an advantage in initialization:
\begin{restatable}[BatchNorm leads to a favourable
    initialization]{lma}{init}
    Let $\bm{W^*}$ and $\bm{\widehat{W}^*}$ be the set of local optima for
    the weights in the normal and BN networks, respectively. For any
    initialization $W_0$
    $$\norm{W_0 - \widehat{W}^*}^2
    \leq \norm{W_0 - W^*}^2 - \frac{1}{\norm{W^*}^2}\left(
    \norm{W^*}^2 - \inner{W^*}{W_0}\right)^2,$$
    if $\inner{W_0}{W^*} > 0$, where $\widehat{W}^*$ and $W^*$ are closest
    optima for BN and standard network, respectively.
\end{restatable}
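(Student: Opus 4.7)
The argument reduces to a single structural observation about BatchNorm: when $\gamma$ and $\beta$ are held fixed as in Section~\ref{sec:bn_theory_setup}, replacing the weights $W$ by $cW$ for any $c > 0$ scales both the pre-activation mean and standard deviation of $\bm{\l_j}$ by the same factor $c$, so the whitened activations $\bmbn$, and hence the full downstream loss $\hL$, are invariant. This scale invariance means the BN loss is constant along every open ray through the origin, so optima of $\hL$ come in \emph{rays} rather than isolated points: if any single point of a ray lies in $\bm{\widehat{W}^*}$, the whole open ray does. Together with the preceding observation (\emph{BatchNorm does more than rescaling}), which identifies normal optima with BN optima via a suitable choice of BN parameters, this shows that the entire ray $\{cW^* : c > 0\}$ is contained in $\bm{\widehat{W}^*}$.

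The rest is elementary planar geometry. The distance from $W_0$ to $\bm{\widehat{W}^*}$ is upper bounded by the distance from $W_0$ to this ray, attained at the orthogonal projection
\begin{equation*}
\widetilde{W} = c^* W^*, \qquad c^* = \frac{\inner{W_0}{W^*}}{\norm{W^*}^2}.
\end{equation*}
The hypothesis $\inner{W_0}{W^*} > 0$ is precisely what guarantees $c^* > 0$, so $\widetilde{W}$ lies on the open ray and is a valid element of $\bm{\widehat{W}^*}$; hence $\norm{W_0 - \widehat{W}^*}^2 \leq \norm{W_0 - \widetilde{W}}^2$.

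To conclude, I would expand both squared distances directly,
\begin{equation*}
\norm{W_0 - \widetilde{W}}^2 = \norm{W_0}^2 - \frac{\inner{W_0}{W^*}^2}{\norm{W^*}^2}, \qquad \norm{W_0 - W^*}^2 = \norm{W_0}^2 - 2\inner{W_0}{W^*} + \norm{W^*}^2,
\end{equation*}
and subtract. Placing the difference over the common denominator $\norm{W^*}^2$ collapses the numerator into the perfect square $(\norm{W^*}^2 - \inner{W^*}{W_0})^2$, which is exactly the additive gap asserted by the lemma. The only genuinely conceptual step is the first one---recognizing that BatchNorm's scale invariance promotes each isolated normal optimum into an entire continuum of BN optima along a ray; once this is in hand, everything else is the Pythagorean identity applied to the projection onto that ray.
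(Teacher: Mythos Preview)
Your proposal is correct and follows essentially the same route as the paper's own proof: both exploit the scale invariance $BN((aW)x)=BN(Wx)$ to produce the ray $\{cW^*:c>0\}$ of BN optima, pick the orthogonal projection $c^*=\inner{W_0}{W^*}/\norm{W^*}^2$ (the paper calls this $k$), use the positivity hypothesis to ensure the projection lies on the open ray, and then compute the difference of squared distances, which collapses to $-\norm{W^*}^2(1-k)^2=-\tfrac{1}{\norm{W^*}^2}(\norm{W^*}^2-\inner{W^*}{W_0})^2$. Your framing via the Pythagorean identity is slightly more explicit about \emph{why} the projection is the right point to pick, but the argument is the same.
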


\section{Related work}
\label{sec:related}
A number of normalization schemes have been proposed as alternatives to
BatchNorm, including normalization over layers~\cite{BaKH16}, subsets of
the batch~\cite{WuHe18}, or across image
dimensions~\cite{UlyanovVL16}.
 Weight Normalization~\cite{SalimansK16} follows a complementary approach normalizing the weights instead of the activations. 
Finally, ELU~\cite{ClevertUH16} and SELU~\cite{KlambauerUMH17} are two proposed examples of non-linearities that have a progressively decaying slope instead of a sharp saturation and can be used as an alternative for BatchNorm. 
These techniques offer an improvement over standard training that is comparable to that of BatchNorm but do not attempt to explain BatchNorm's success.

Additionally, work on topics related to DNN optimization has uncovered a number of other BatchNorm benefits.
Li et al.~\cite{ImTB16} observe that networks with BatchNorm tend to have optimization trajectories that rely less on the parameter initialization.
Balduzzi et al.~\cite{BalduzziFLLMM17} observe that models without BatchNorm tend to suffer from small correlation between different gradient coordinates and/or unit activations. They report that this behavior is profound in deeper models and argue how it constitutes an obstacle to DNN optimization.
Morcos et al.~\cite{MorcosBRB18} focus on the generalization properties of DNN.
They observe that the use of BatchNorm results in models that rely less on single directions in the activation space, which they find to be connected to the generalization properties of the model.

Recent work~\cite{KohlerDLZNH18} identifies simple, concrete settings where a variant of training with BatchNorm provably improves over standard training algorithms. The main idea is that decoupling the length and direction of the weights (as done in BatchNorm and Weight Normalization~\cite{SalimansK16}) can be exploited to a large extent.
By designing algorithms that optimize these parameters separately, with (different) adaptive step sizes, one can achieve significantly faster convergence rates for these problems.

\section{Conclusions}
\label{sec:conclusions}
In this work, we have investigated the roots of BatchNorm's effectiveness as a technique for training deep neural networks.
We find that the widely believed connection between the performance of BatchNorm and the internal covariate shift is tenuous, at best. 
In particular, we demonstrate that existence of internal covariate shift, at least when viewed from the -- generally adopted -- distributional stability perspective, is {\em not} a good predictor of training performance. Also, we show that, from an optimization viewpoint, BatchNorm might not be even reducing that shift.

Instead, we identify a key effect that BatchNorm has on the training process: it reparametrizes the underlying optimization problem to make it more stable (in the sense of loss Lipschitzness) and smooth (in the sense of ``effective'' $\beta$-smoothness of the loss). This implies that the gradients used in training are more predictive and well-behaved, which enables faster and more effective optimization.
This phenomena also explains and subsumes some of the other previously observed benefits of BatchNorm, such as robustness to hyperparameter setting and avoiding gradient explosion/vanishing. We also show that this smoothing effect is not unique to BatchNorm. In fact, several other natural normalization strategies have similar impact and result in a comparable performance gain.

We believe that these findings not only challenge the conventional wisdom about BatchNorm but also bring us closer to a better understanding of this technique.
We also view these results as an opportunity to encourage the community to pursue a more systematic investigation of the algorithmic toolkit of deep learning and the underpinnings of its effectiveness.

Finally, our focus here was on the impact of BatchNorm on training but our findings might also shed some light on the BatchNorm's tendency to improve generalization. Specifically, it could be the case that the smoothening effect of BatchNorm's reparametrization encourages the training process to converge to more flat minima. Such minima are believed to facilitate better generalization~\cite{HochreiterS96,KeskarMNST16}. We hope that future work will investigate this intriguing possibility.

\section*{Acknowledgements}
We thank Ali Rahimi and Ben Recht for helpful comments on a preliminary version of this paper.

Shibani Santurkar was supported by the National Science Foundation (NSF) under grants IIS-1447786, IIS-1607189, and CCF-1563880, and the Intel Corporation.
Dimitris Tsipras was supported in part by the NSF grant CCF-1553428 and the NSF Frontier grant CNS-1413920.
Andrew Ilyas was supported in part by NSF awards CCF-1617730 and IIS-1741137, a Simons 
Investigator Award, a Google Faculty Research Award, and an MIT-IBM Watson AI Lab research grant.
Aleksander M\k{a}dry was supported in part by an Alfred P.~Sloan Research Fellowship, a Google Research Award, and the NSF grants CCF-1553428 and CNS-1815221.
 
\small
\bibliographystyle{plain}
\bibliography{biblio}
\clearpage

\appendix
\normalsize
\section{Experimental Setup}
\label{app:setup}

In Section~\ref{sec:models}, we provide details regarding the architectures used in our analysis. Then in Section~\ref{sec:empdet} we discuss the specifics of the setup and measurements used in our experiments.

\subsection{Models}
\label{sec:models}
We use two standard deep architectures -- a VGG-like network, and a deep \emph{linear} network (DLN). 
The VGG model achieves close to state-of-the-art performance while being fairly simple\footnote{We choose to not experiment with ResNets~\cite{HeZRS16} since they seem to provide several similar benefits to BatchNorm~\cite{HardtM17} and would introduce conflating factors into our study.}. 
Preliminary experiments on other architectures gave similar results.
We study DLNs with full-batch training since they allow us to isolate the effect of non-linearities, as well as the stochasticity of the training procedure.
 Both these architectures show clear a performance benefits with BatchNorm.

Specific details regarding both architectures are provided below:

\begin{enumerate}
	\item \textbf{Convolutional VGG architecture on CIFAR10 (VGG):}  
	
	We fit a VGG-like network, a standard convolutional architecture~\cite{SimonyanZ15}, to a canonical image classification problem (CIFAR10~\cite{Krizhevsky09}). 
	We optimize using standard stochastic gradient descent and train for $15,000$ steps (training accuracy plateaus). 
	 We use a batch size of $128$ and a fixed learning rate of $0.1$ unless otherwise specified.
	Moreover, since our focus is on training, we do not use data augmentation. 
	This architecture can fit the training dataset well and achieves close to state-of-the art test performance.
	Our network achieves a test accuracy of 83\% with BatchNorm and 80\% without (this becomes 92\% and 88\% respectively with data augmentation).
	
	\item \textbf{$25$-Layer Deep Linear Network on Synthetic Gaussian Data (DLN):}
	
	DLN are a factorized approach to solving a simple regression problem, i.e., fitting $Ax$ from $x$. 
	Specifically, we consider a deep network with $k$ fully connected layers and an $\ell_2$ loss.
	Thus, we are minimizing $\|W_1\ldots W_k x - Ax\|_2^2$ over $W_i$
	\footnote{While the factorized formulation is equivalent to a single matrix in terms of expressivity, the optimization landscape is drastically different~\cite{HardtM17}.}.
	We generate inputs $x$ from a Gaussian Distribution and a matrix $A$ with i.i.d.\ Gaussian entries. 
	We choose $k$ to be 25, and the dimensions of $A$ to be $10\times10$. 
	All the matrices $W_i$ are square and have the same dimensions.
	We train DLN using full-batch gradient descent for $10,000$ steps (training loss plateaus).The size of the dataset is $1000$ (same as the batch size) and the learning rate is $10^{-6}$ unless otherwise specified.
\end{enumerate}
For both networks we use standard Glorot initialization \cite{GlorotB10}. Further the learning rates were selected based on hyperparameter optimization to find a configuration where the training performance for the network was the best.

\subsection{Details}
\label{sec:empdet}

\subsubsection{``Noisy'' BatchNorm Layers}
\label{sec:setup_nb}
Consider $a_{i,j}$, the $j$-th activation of the $i$-th example in the batch. 
Note that batch norm will ensure that the distribution of $a_{\cdot, j}$ for some $j$ will have fixed mean and variance (possibly learnable).  

At every time step, our noise model consists of perturbing each activation for each sample in a batch with noise i.i.d. from a non-zero mean, non-unit variance distribution $D_j^t$. 
The distribution $D_j^t$ itself is time varying and its parameters are drawn i.i.d from another  distribution $D_j$. 
The specific noise model is described in Algorithm~\ref{alg:nb}. In our experiments, $n_\mu = 0.5$, $n_\sigma = 1.25$ and $r_\mu = r_\sigma = 0.1$.
(For convolutional layers, we follow the standard convention of treating the height and width dimensions as part of the batch.) 

\begin{algorithm}[H]
	\caption{``Noisy'' BatchNorm}
	\label{alg:nb}
	\begin{algorithmic}[1]
		\State \%\textbf{ For constants $n_m$, $n_v$, $r_m$, $r_v$.}  \\
		\For{each layer at time $t$}
		\State $a^t_{i,j} \gets \textit{Batch-normalized activation for unit j and sample i}$ \\ 
		\For{each $j$}
            \Comment{Sample the parameters ($m^t_j, v^t_j$) of $D_j^t$ from $D_j$}
            \State $\mu^t \sim U(-n_\mu, n_\mu)$ 
            \State $\sigma^t \sim U(1, n_\sigma)$
		\EndFor \\

		\For{each $i$}
		\Comment{Sample noise from $D_j^t$}
		\For{each $j$}
		\State $m^t_{i,j}\sim U(\mu - r_\mu, \mu + r_\mu)$
		\State $s^t_{i,j}\sim \mathcal{N}(\sigma, r_\sigma)$
		\State $a^t_{i,j}\gets s_{i,j}^t \cdot a_{i,j} +m^t_{i,j}$
		\EndFor
		\EndFor
		\EndFor
	\end{algorithmic}
\end{algorithm}

While plotting the distribution of activations, we sample random activations from any given layer of the network and plot its distribution over the batch dimension for fully connected layers, and over the batch, height, width dimension for convolutional layers as is standard convention in BatchNorm for convolutional networks.

\subsubsection{Loss Landscape}
To measure the smoothness of the loss landscape of a network during the course of training, we essentially take steps of different lengths in the direction of the gradient and measure the loss values obtained at each step. Note that this is not a training procedure, but an evaluation of the local loss landscape at every step of the training process.

For VGG we consider steps of length ranging from $[1/2, 4] \times \textit{step size}$, whereas for DLN we choose   $[1/100, 30] \times \textit{step size}$. Here \textit{step size} denotes the hyperparameter setting with which the network is being trained.
We choose these ranges to roughly reflect the range of parameters that are valid for standard training of these models.
The VGG network is much more sensitive to the learning rate choices (probably due to the non-linearities it includes), so we perform line search over a restricted range of parameters.
Further, the maximum step size was chosen slightly smaller than the learning rate at which the standard (no BatchNorm) network diverges during training.

\section{Omitted Figures}
\label{app:omitted_figures}

\begin{figure}[!t]
\begin{center}
	\begin{subfigure}[b]{0.85\textwidth}
		\includegraphics[width=\textwidth]{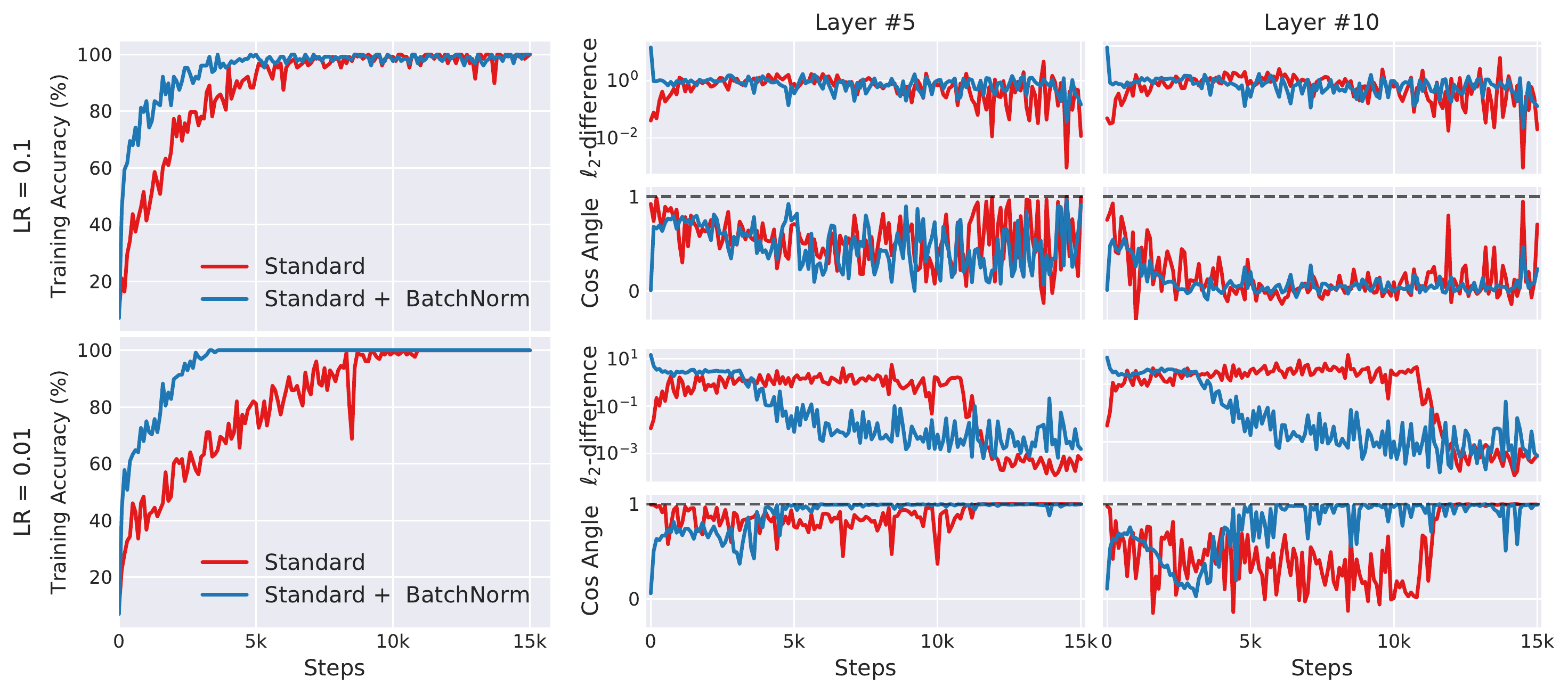} 
		\caption{VGG}
	\end{subfigure}
	\begin{subfigure}[b]{0.85\textwidth}
		\includegraphics[width=\textwidth]{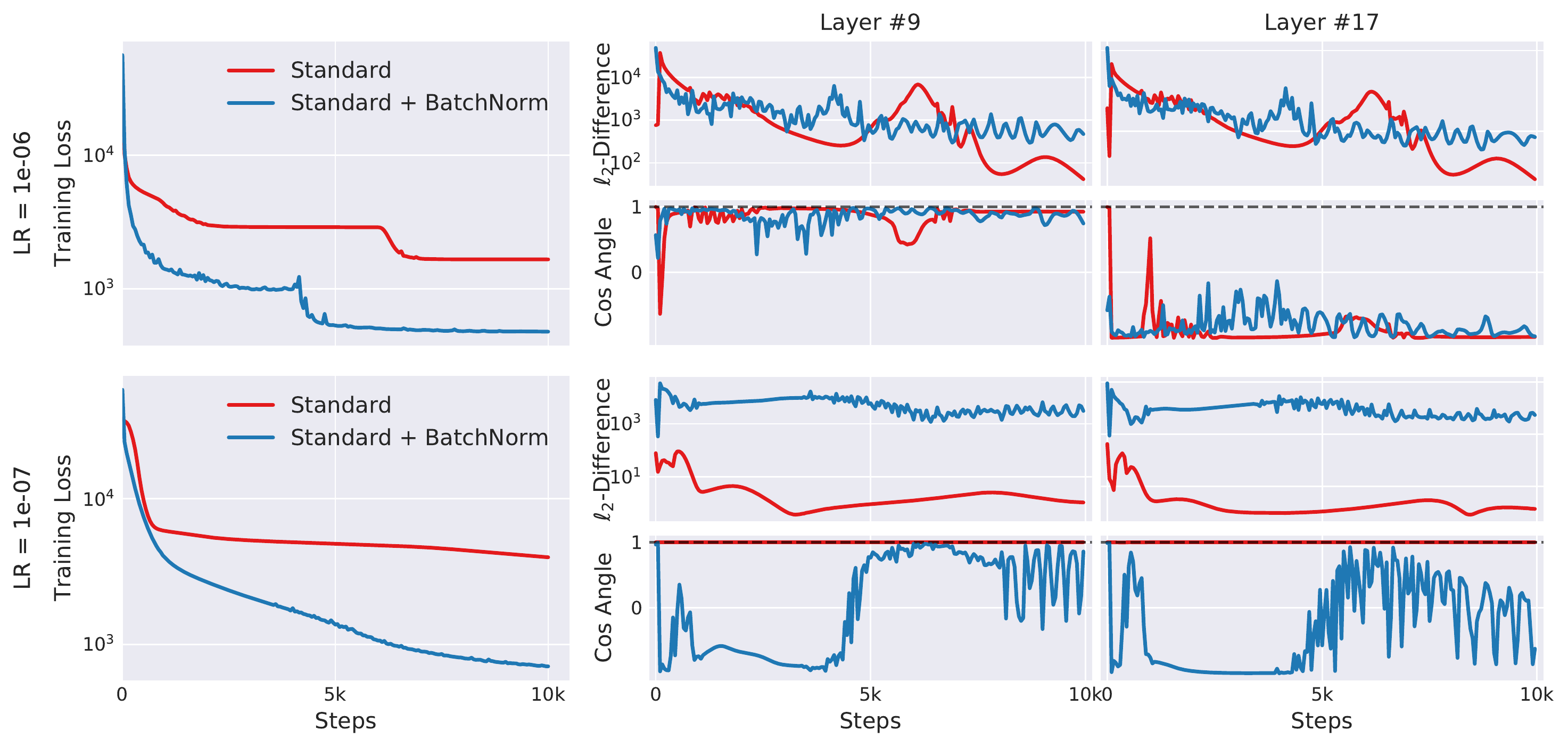} 
		\caption{DLN}
	\end{subfigure}
\end{center}
	\caption{Measurement of ICS (as defined in Definition~\ref{defn1}) in networks with and without BatchNorm layers. For a layer we measure the cosine angle (ideally $1$) and $\ell_2$-difference of the gradients (ideally $0$) before and after updates to the preceding layers (see Definition~\ref{defn1}). Models with BatchNorm have similar, or even worse, internal covariate shift, despite performing better in terms of accuracy and loss. (Stabilization of BatchNorm faster during training is an artifact of parameter convergence.)}
	\vspace{-0.1in}
\end{figure}

Additional visualizations for the analysis performed in Section~\ref{sec:ll} are presented below. 

\begin{figure}[!htp]
	\begin{center}
		\begin{tabular}{c}
			\includegraphics[width=0.95\textwidth]{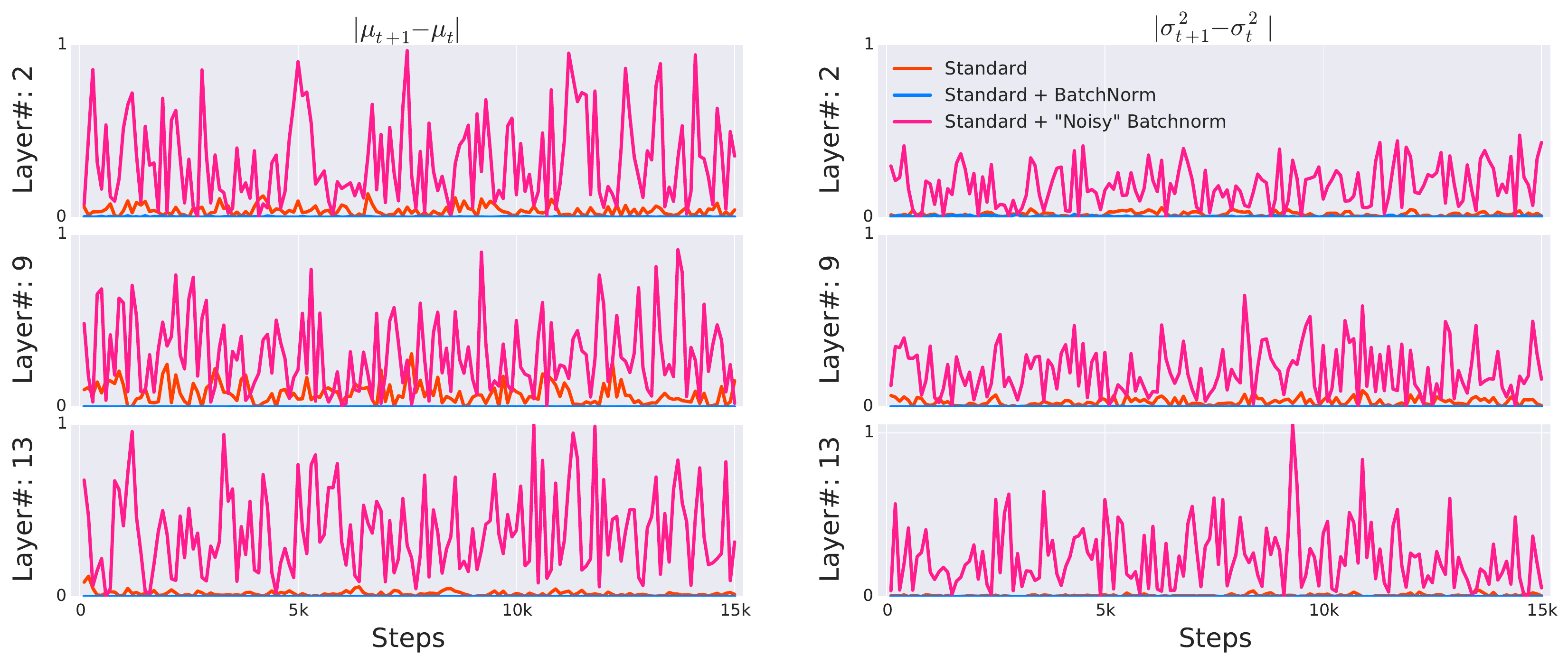} \\
			(a) \\
			\includegraphics[width=1\textwidth]{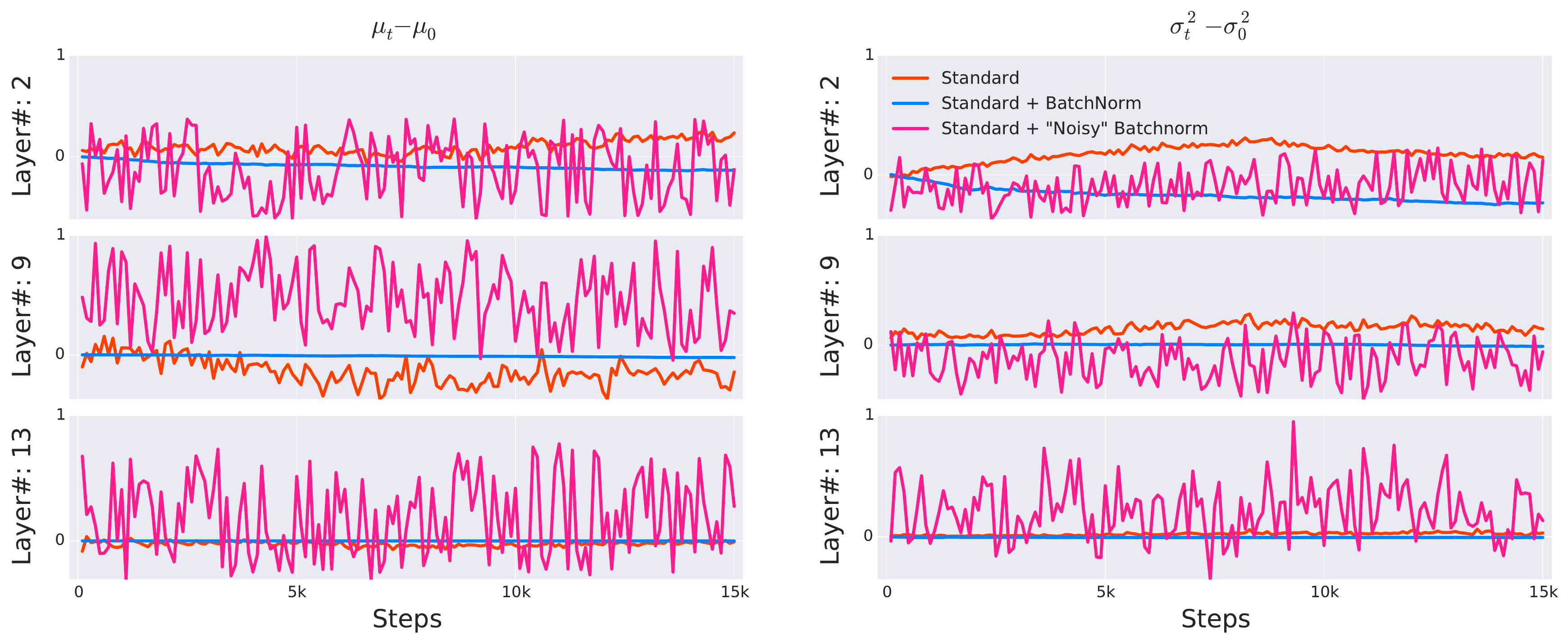} \\
			(b)  \\
		\end{tabular}
	\end{center}
	\caption{Comparison of change in the first two moments (mean and variance) of distributions of example activations for a given layer between two successive steps of the training process.
		Here we compare VGG networks trained without BatchNorm (Standard), with BatchNorm (Standard + BatchNorm) and with explicit ``covariate shift'' added to BatchNorm layers (Standard + ``Noisy'' BatchNorm).
		``Noisy'' BatchNorm layers have significantly higher ICS than standard networks, yet perform better from an optimization perspective (cf.~Figure~\ref{fig:noise}).
	}
	\label{fig:dics}
\end{figure}

\begin{figure}[!htp]
	\begin{center}
		\begin{tabular}{c}
			\includegraphics[width=0.95\textwidth]{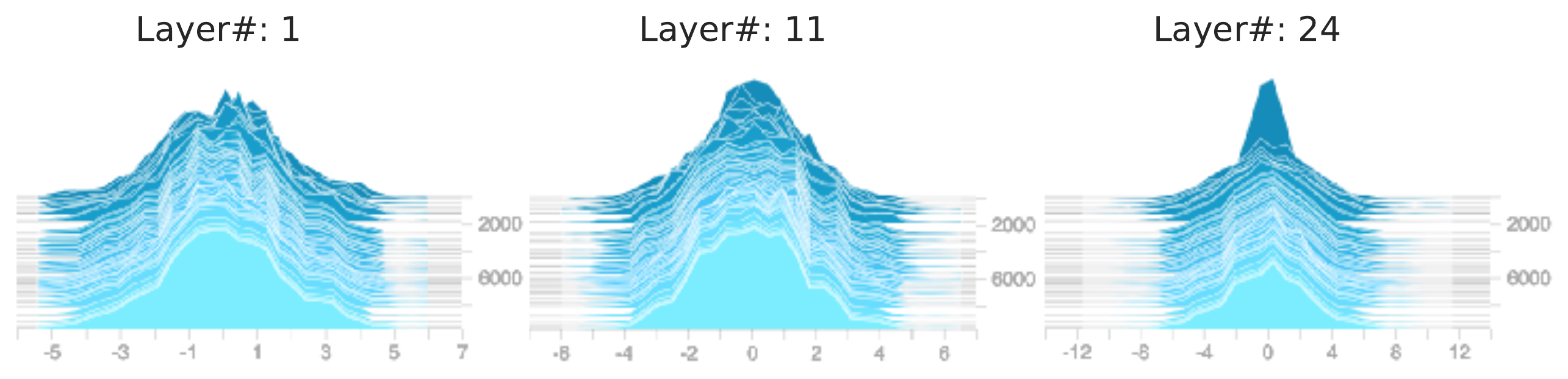} 
		\end{tabular}
	\end{center}
	\caption{Distributions of activations from different layers of a $25$-Layer deep linear network. 
		Here we sample a random activation from a given layer to visualize its distribution over training.}
	\label{fig:var}
\end{figure}

\begin{figure}[!htp]
	\begin{subfigure}[b]{0.3\textwidth}
		\centering
			\includegraphics[width=1.8in]{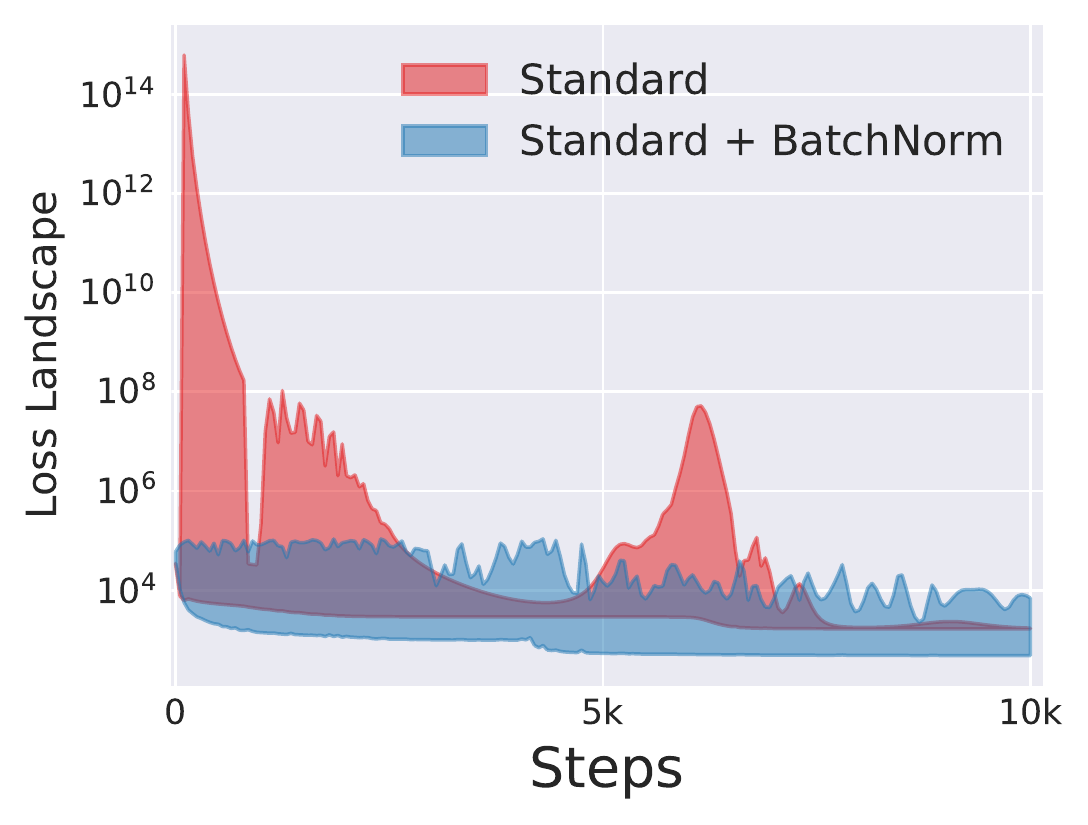} 
			\caption{loss landscape}
	\end{subfigure}
\hfil
	\begin{subfigure}[b]{0.33\textwidth}
	\centering
	\includegraphics[width=1.8in]{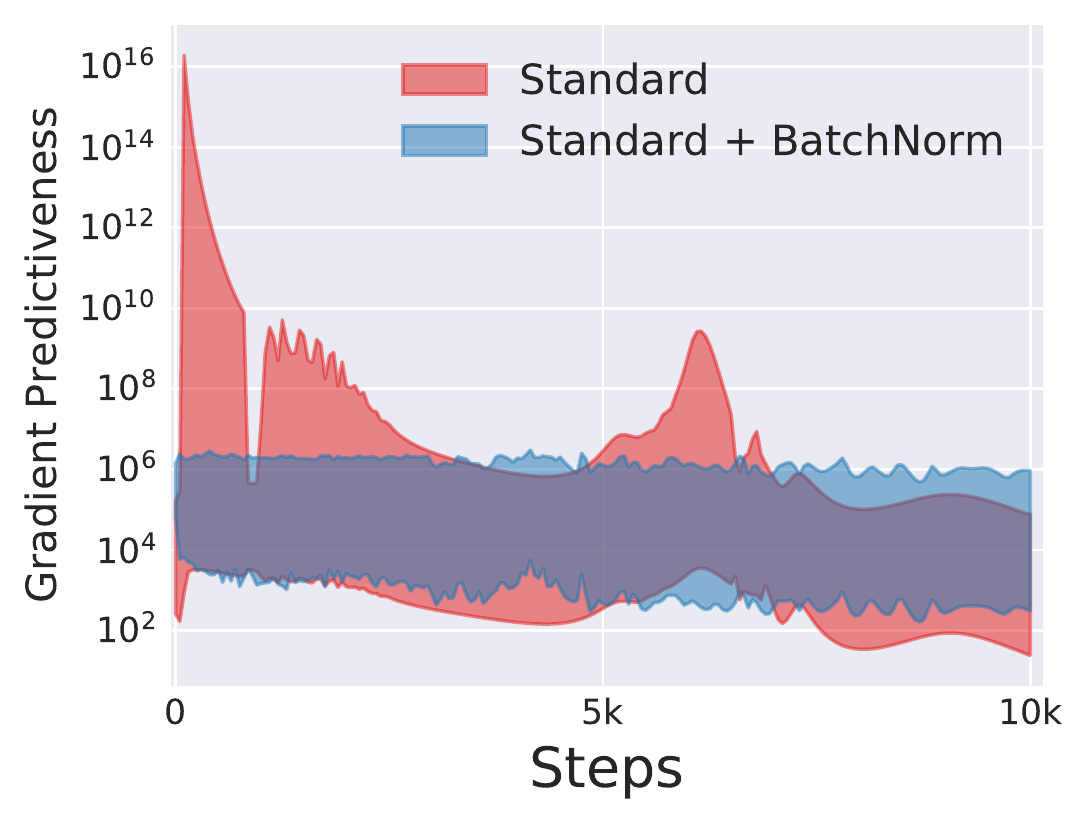} 
	\caption{gradient predictiveness}
\end{subfigure}
\hfil
	\begin{subfigure}[b]{0.3\textwidth}
	\centering
	\includegraphics[width=1.8in]{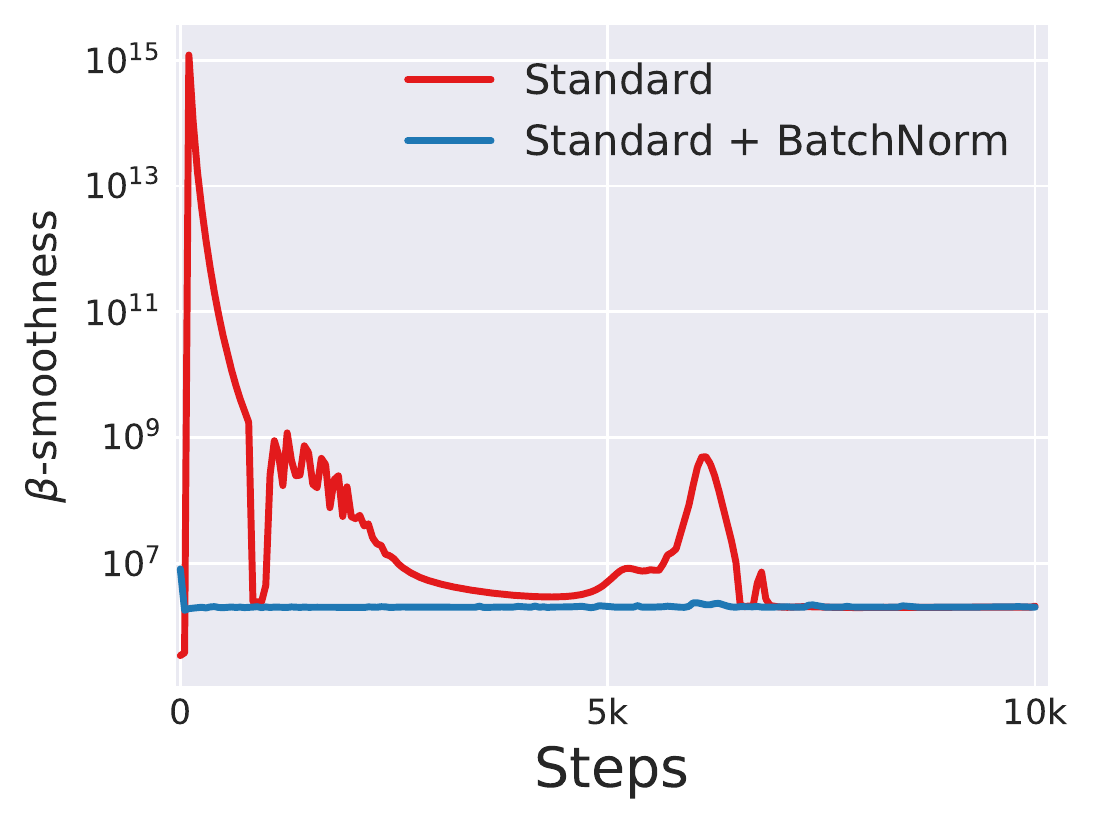} 
	\caption{``effective'' $\beta$-smoothness}
\end{subfigure}
	\caption{Analysis of the optimization landscape during training of deep linear networks with and without BatchNorm.  
		At a particular training step, we measure the variation (shaded region) in loss (a) and $\ell_2$ changes in the gradient (b) as we move in the gradient direction.  
		The ``effective'' $\beta$-smoothness (c) captures the maximum $\beta$ value observed while moving in this direction.
		There is a clear improvement in each of these measures of smoothness of the optimization landscape in networks with BatchNorm layers.
		(Here, we cap the maximum distance moved to be $\eta=30\times$ the gradient since for larger steps the standard network just performs works (see Figure~\ref{fig:basic}). However, BatchNorm continues to provide smoothing for even larger distances.)}
	\label{fig:landscape_dln}
\end{figure}

\begin{figure}[!htp]
	\begin{center}
		\begin{tabular}{c}
			\includegraphics[width=0.95\textwidth]{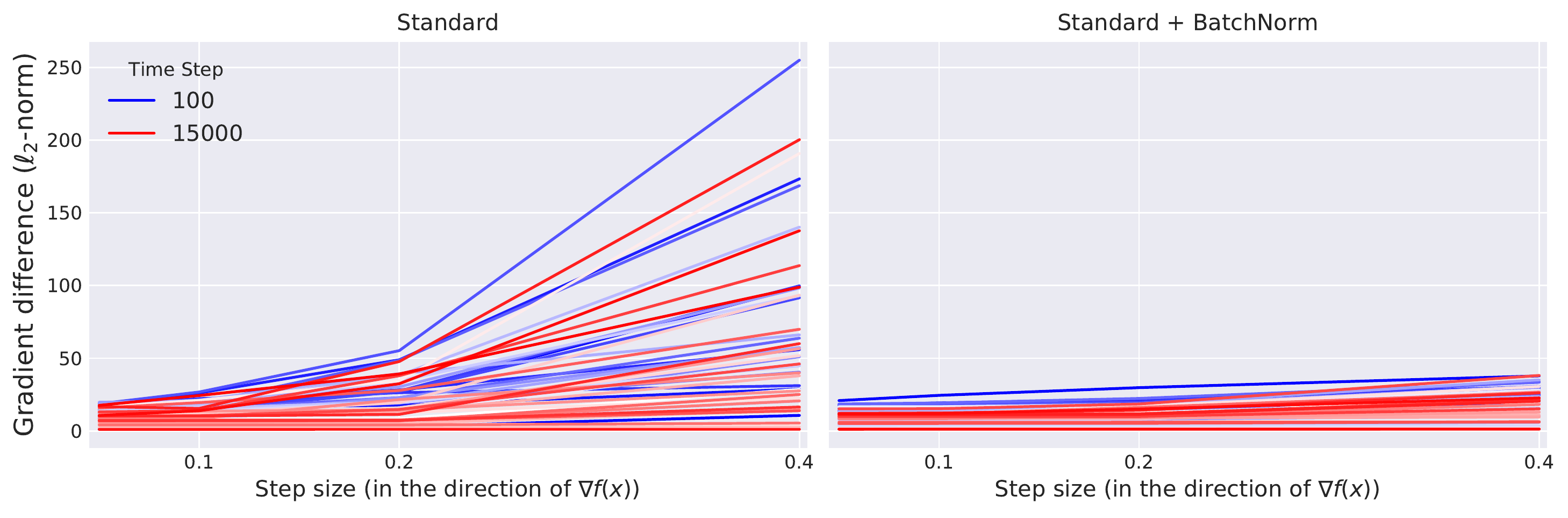} \\
			(a) VGG \\
			\includegraphics[width=0.95\textwidth]{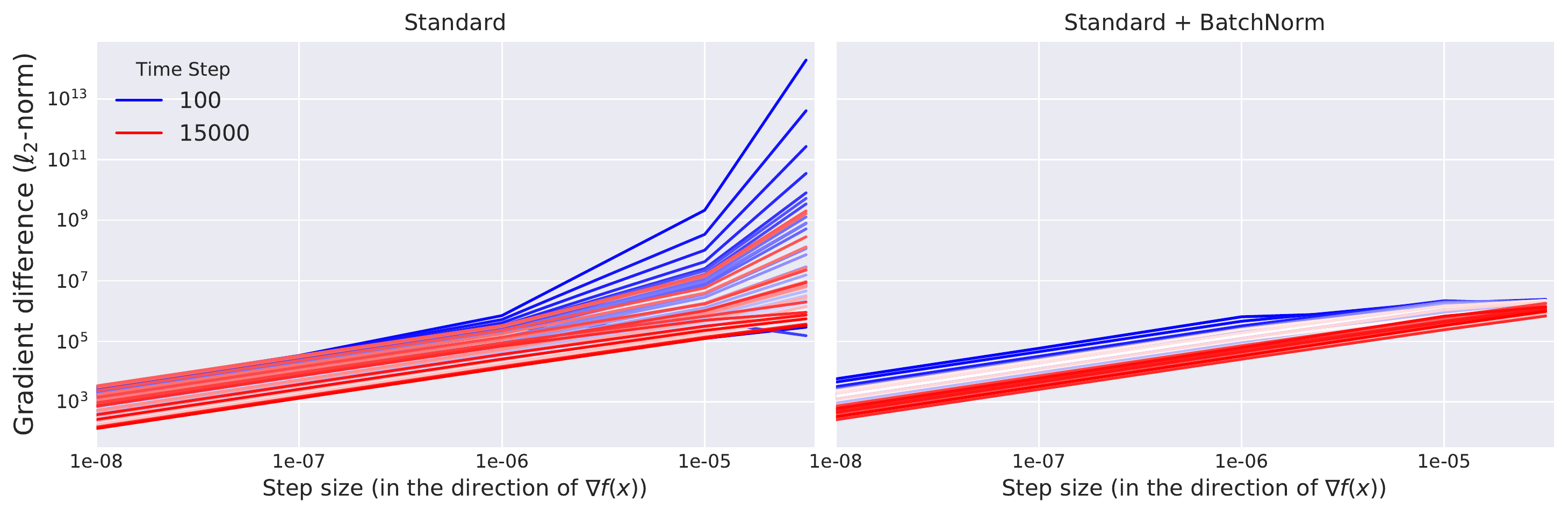} \\
			(b) DLN
		\end{tabular}
	\end{center}
	\caption{Comparison of the predictiveness of gradients with and without BatchNorm. Here, at a given step in the optimization, we measure the $\ell_2$ error between the current gradient, and new gradients which are observed while moving in the direction of the current gradient. We then evaluate how this error varies based on distance traversed in the direction of the gradient. We observe that gradients are significantly more predictive in networks with BatchNorm and change slowly in a given local neighborhood. This explains why networks with BatchNorm are largely robust to a broad range of learning rates.}
	\label{fig:gp}
\end{figure}

\begin{figure}[!htp]
	\begin{center}
		\begin{tabular}{cc}
			\includegraphics[width=0.95\textwidth]{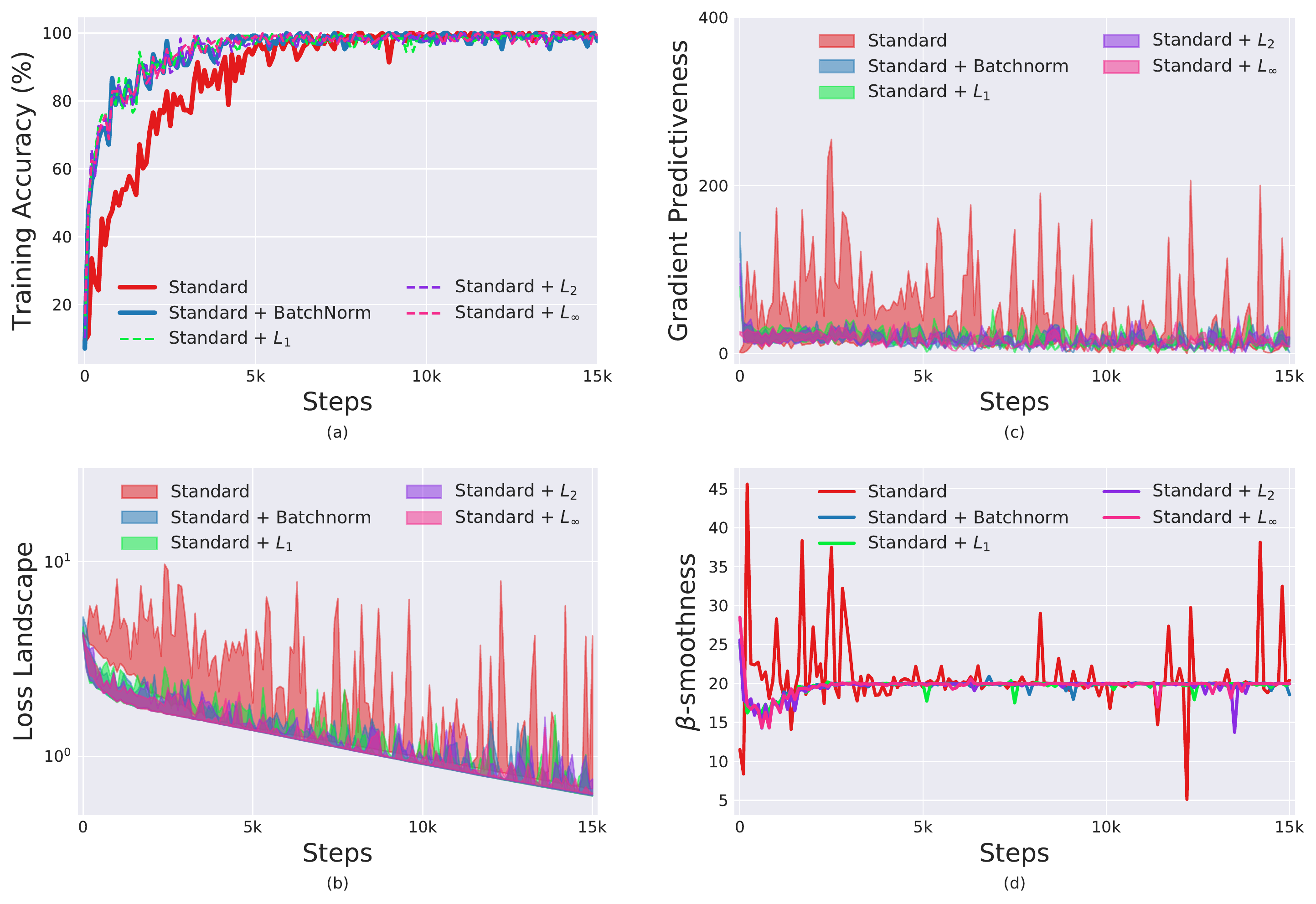} 
		\end{tabular}
	\end{center}
	\caption{Evaluation of VGG networks trained with different $\ell_p$  normalization strategies discussed in Section~\ref{sec:norm}.
		\textit{(a):} Comparison of the training performance of the models.  
		\textit{(b, c, d):} Evaluation of the smoothness of optimization landscape in the various models. At a particular training step, we measure the variation (shaded region) in loss (\textit{b}) and $\ell_2$ changes in the gradient  (\textit{c})  as we move in the gradient direction.  We also measure the maximum $\beta$-smoothness while moving in this direction (\textit{d}).  We observe that networks with any normalization strategy have improved performance and smoothness of the loss landscape over standard training. 
	}
	\label{fig:norms_full_vgg}
	\begin{center}
		\begin{tabular}{cc}
			\includegraphics[width=0.95\textwidth]{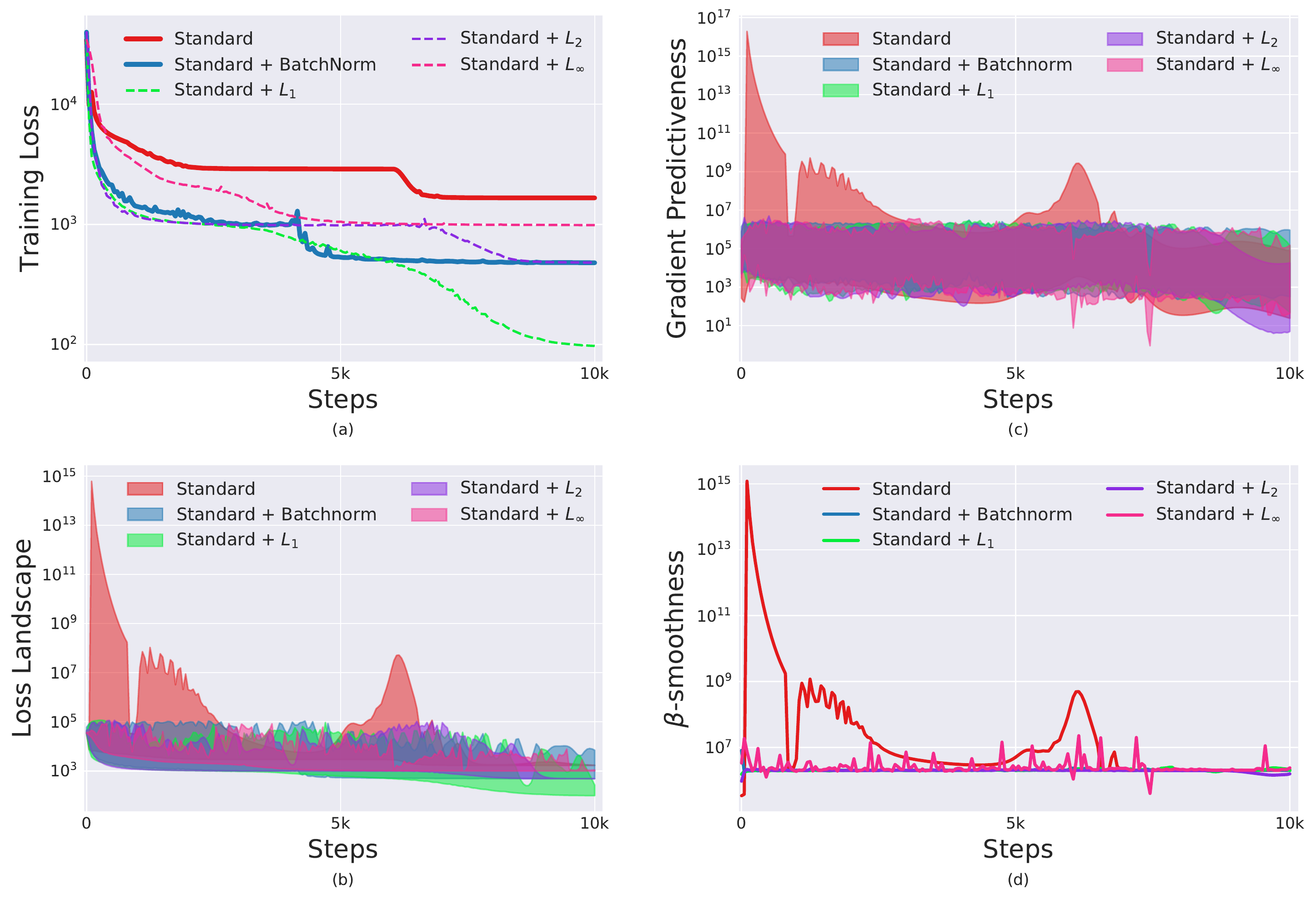} 
		\end{tabular}
	\end{center}
	\caption{Evaluation of deep linear networks trained with different $\ell_p$  normalization strategies. We observe that networks with any normalization strategy have improved performance and smoothness of the loss landscape over standard training. Details of the plots are the same as Figure~\ref{fig:norms_full_vgg} above.
}
	\label{fig:norms_full_dln}
	\vspace{-0.2in}
\end{figure}

\begin{figure}[!t]
   \begin{center}
       \begin{tabular}{cc}
           \includegraphics[width=0.45\textwidth]{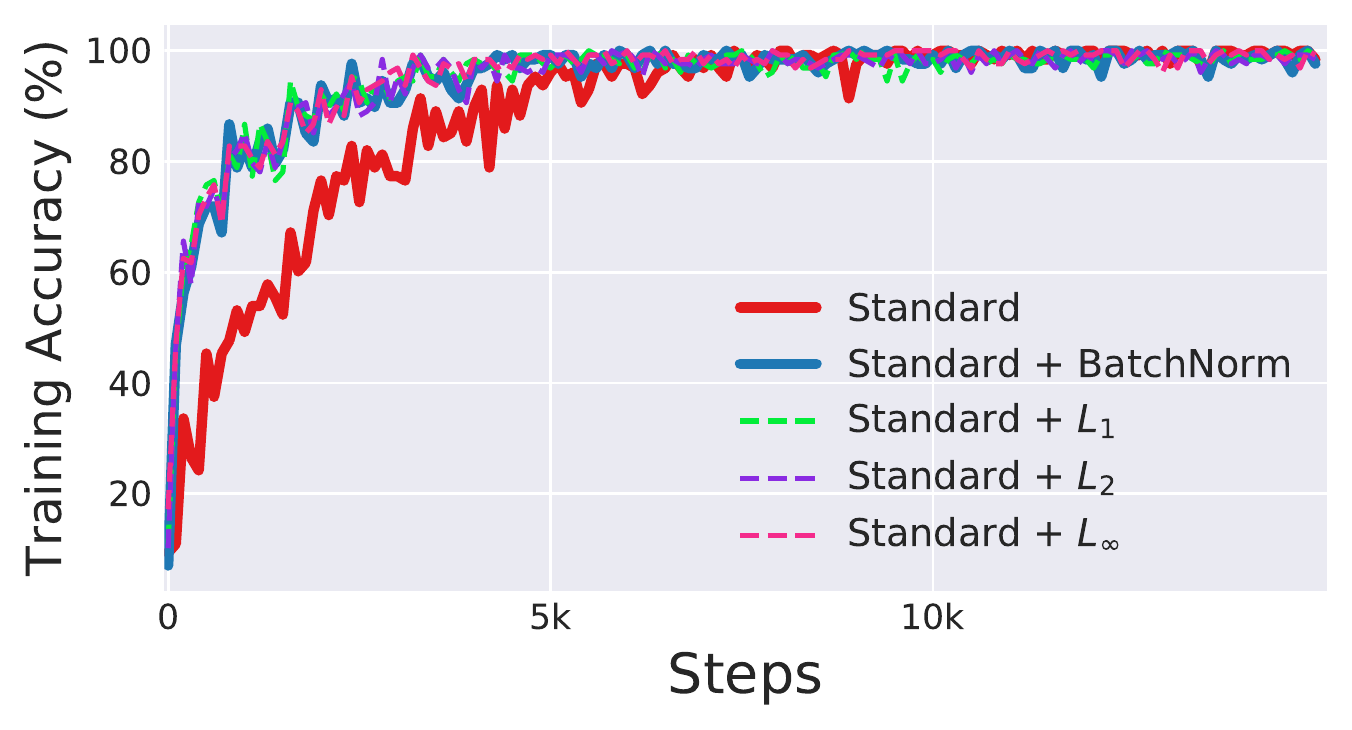} &
           \includegraphics[width=0.45\textwidth]{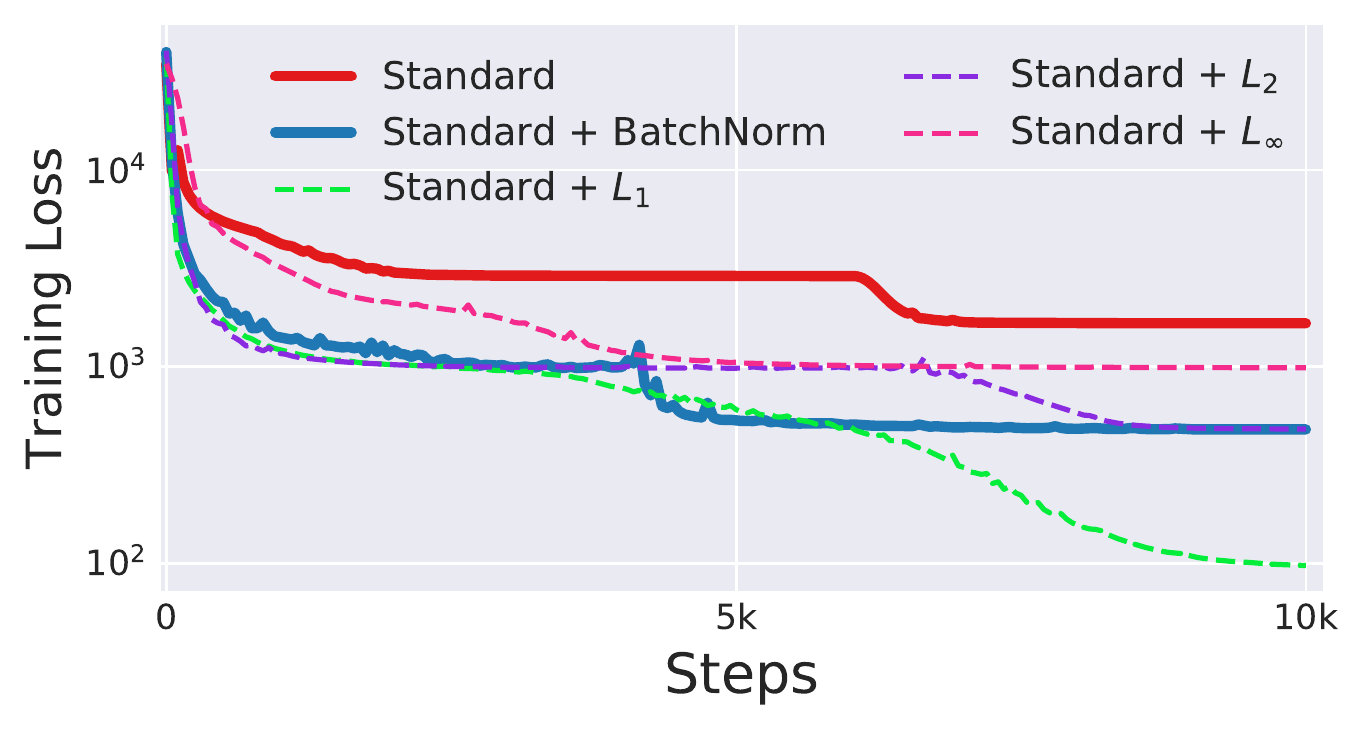} \\
           (a) VGG  & (b) Deep Linear Model 
       \end{tabular}
   \end{center}
   \caption{Evaluation of the training performance of $\ell_p$  normalization techniques discussed in Section~\ref{sec:norm}. For both networks, all $\ell_p$  normalization strategies perform comparably or even better than BatchNorm. This indicates that the performance gain with BatchNorm is not about distributional stability (controlling mean and variance).
   }
   \label{fig:norms}
   \vspace{-0.2in}
\end{figure}

\begin{figure}[tp]
   \includegraphics[width=\textwidth]{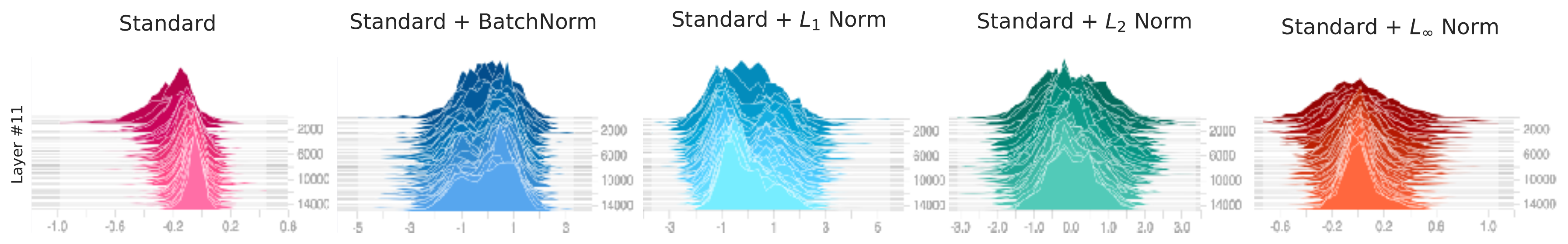}
   \caption{Activation histograms for the VGG network under different normalizations. Here, we randomly sample activations from a given layer and visualize their distributions. Note that the $\ell_p$-normalization techniques leads to larger {distributional covariate shift} compared to normal networks, yet {yield improved optimization performance}. }
   \label{fig:norms_hist}
   \vspace{-0.1in}
\end{figure}

\clearpage
\section{Proofs}
\label{app:proofs}

We now prove the stated theorems regarding the landscape induced by batch normalization. 

We begin with a few facts that can be derived directly from the closed-form
of Batch Normalization, which we use freely in proving the following
theorems.

\subsection{Useful facts and setup}
We consider the same setup pictured in Figure~\ref{fig:sketch} and
described in Section~\ref{sec:bn_theory_setup}. Note that in proving the
theorems we use partial derivative notation instead of gradient notation,
and also rely on a few simple but key facts:

\begin{fact}[Gradient through BatchNorm]
\label{fact:grad-bn}
    The gradient $\d{f}{A^{(b)}}$ through BN and another function $f := f(C)$, 
    where $C = \gamma\cdot B + \beta$, and $B = BN_{0,1}(A) :=
    \frac{A-\mu}{\sigma}$ where $A^{(b)}$ are
    scalar elements of a batch of size $m$ and variance $\sigma^2$ is
    $$\d{f}{\b{A}} = \frac{\gamma}{m\sigma}\left(m\d{f}{\b{C}} - \sum_{k=1}^m
    \d{f}{\kk{C}} - \b{B}\sum_{k=1}^m\d{f}{\kk{C}}\kk{B} \right)$$
\end{fact}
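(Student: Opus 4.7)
The plan is a straightforward chain-rule unrolling of the batch-normalization reparametrization, with care taken over the fact that both $\mu$ and $\sigma$ are functions of \emph{all} batch entries, so every whitened output $\kk{B}$ depends on every input $\b{A}$. First I would reduce the problem to differentiating the whitened variable: since $\kk{C} = \gamma \kk{B} + \beta$ with $\gamma,\beta$ treated as constants, the chain rule immediately gives
\[
\d{f}{\b{A}} \;=\; \gamma \sum_{k=1}^m \d{f}{\kk{C}}\,\d{\kk{B}}{\b{A}},
\]
so everything reduces to computing $\d{\kk{B}}{\b{A}}$ for arbitrary indices $k,b$.

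Next I would compute the two auxiliary derivatives that appear when differentiating $\kk{B} = (\kk{A}-\mu)/\sigma$ via the quotient rule. The mean derivative is trivial, $\d{\mu}{\b{A}} = 1/m$. The variance derivative is the one non-mechanical step: starting from $\sigma^2 = \tfrac{1}{m}\sum_k(\kk{A}-\mu)^2$ and differentiating, the cross term picks up the factor $\sum_k(\kk{A}-\mu)$, which vanishes by definition of $\mu$. That cancellation collapses what would be an $m$-term sum into the clean expression $\d{\sigma}{\b{A}} = \b{B}/m$.

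Substituting these into the quotient rule yields
\[
\d{\kk{B}}{\b{A}} \;=\; \frac{1}{\sigma}\left(\I[k=b] - \frac{1}{m}\right) \;-\; \frac{\kk{B}\,\b{B}}{m\sigma},
\]
which when plugged back into the chain-rule sum above produces three pieces. The $\I[k=b]$ term collapses the sum to a single summand $(\gamma/\sigma)\,\d{f}{\b{C}}$; the $-1/m$ term produces $-(\gamma/(m\sigma))\sum_k \d{f}{\kk{C}}$; and the rank-one piece produces $-(\gamma\,\b{B}/(m\sigma))\sum_k \d{f}{\kk{C}}\,\kk{B}$. Pulling out the common prefactor $\gamma/(m\sigma)$ collects these three terms into exactly the claimed identity.

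The only real obstacle is bookkeeping — tracking which indices are summed and which are free, and remembering that both $\mu$ and $\sigma$ are full-batch functions so every $\kk{B}$ picks up a contribution from every $\b{A}$. The single substantive observation is the $\sum_k(\kk{A}-\mu)=0$ cancellation in $\d{\sigma}{\b{A}}$; without it the answer would not reduce to the compact three-term form stated in the fact.
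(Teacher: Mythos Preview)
Your proposal is correct and is exactly the standard chain-rule derivation one would expect; the paper itself does not prove this fact at all, merely stating it as a preliminary identity ``derived directly from the closed-form of Batch Normalization.'' Your computation of $\d{\sigma}{\b{A}} = \b{B}/m$ via the $\sum_k(\kk{A}-\mu)=0$ cancellation and the subsequent three-term split is the natural route, and indeed your intermediate expression for $\d{\kk{B}}{\b{A}}$ is precisely the content of the paper's Fact~\ref{fact:bn-out-grad} (up to the $\gamma$ scaling).
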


\begin{fact}[Gradients of normalized outputs]
\label{fact:bn-out-grad}
    A convenient gradient of BN is given as
    \begin{equation}
    \label{eq:yd}
	\d{\bn^{(b)}}{\l^{(k)}} = \frac{1}{\sigma}\left(\textbf{1}[b=k] -
    \frac{1}{m} - \frac{1}{m}\bn^{(b)}\bn^{(k)}\right),
    \end{equation}
    and thus
    \begin{equation}
	\d{\z^{(b)}}{\l^{(k)}} = \frac{\gamma}{\sigma}\left(\textbf{1}[b=k] -
    \frac{1}{m} - \frac{1}{m}\bn^{(b)}\bn^{(k)}\right),
    \end{equation}
\end{fact}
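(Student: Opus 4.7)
The plan is to derive both identities directly from the closed-form of BatchNorm by differentiating $\bn^{(b)} = (\l^{(b)} - \mu)/\sigma$ with respect to an arbitrary batch element $\l^{(k)}$, treating $\mu = \frac{1}{m}\sum_b \l^{(b)}$ and $\sigma = \sqrt{\frac{1}{m}\sum_b (\l^{(b)}-\mu)^2}$ as functions of the whole mini-batch. The identity for $\z^{(b)} = \gamma \bn^{(b)} + \beta$ then follows immediately by linearity (with $\gamma,\beta$ treated as constants per the setup in Section~\ref{sec:bn_theory_setup}), picking up a factor of $\gamma$.

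The first step is to compute the two elementary pieces $\d{\mu}{\l^{(k)}}$ and $\d{\sigma}{\l^{(k)}}$. The mean derivative is $\d{\mu}{\l^{(k)}} = \frac{1}{m}$ by inspection. For $\sigma$, I would start from $\sigma^2 = \frac{1}{m}\sum_b (\l^{(b)}-\mu)^2$, differentiate both sides using $\d{(\l^{(b)}-\mu)}{\l^{(k)}} = \mathbf{1}[b=k] - \frac{1}{m}$, and obtain $2\sigma\, \d{\sigma}{\l^{(k)}} = \frac{2}{m}\sum_b (\l^{(b)}-\mu)(\mathbf{1}[b=k] - \frac{1}{m})$. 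The cross-term $\frac{1}{m}\sum_b(\l^{(b)}-\mu)$ vanishes because the deviations from the mean sum to zero, leaving $\d{\sigma}{\l^{(k)}} = \frac{\l^{(k)}-\mu}{m\sigma} = \frac{\bn^{(k)}}{m}$.

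The second step is to apply the product (or quotient) rule to $\bn^{(b)} = (\l^{(b)}-\mu)\cdot\sigma^{-1}$:
\[
\d{\bn^{(b)}}{\l^{(k)}} = \frac{1}{\sigma}\left(\mathbf{1}[b=k] - \frac{1}{m}\right) - \frac{\l^{(b)}-\mu}{\sigma^{2}}\cdot\d{\sigma}{\l^{(k)}}.
\]
Substituting the expression for $\d{\sigma}{\l^{(k)}}$ from the previous step and rewriting $(\l^{(b)}-\mu)/\sigma = \bn^{(b)}$ collapses the right-hand side into $\frac{1}{\sigma}\bigl(\mathbf{1}[b=k] - \frac{1}{m} - \frac{1}{m}\bn^{(b)}\bn^{(k)}\bigr)$, which is the claimed formula. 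The second identity follows since $\d{\z^{(b)}}{\l^{(k)}} = \gamma\cdot\d{\bn^{(b)}}{\l^{(k)}}$.

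There is no genuine obstacle in this derivation; it is purely a bookkeeping calculation. The one subtle step is recognizing the cancellation $\sum_b(\l^{(b)}-\mu)=0$ in the derivative of $\sigma$, which is what reduces the sum-over-batch contribution to a clean $\bn^{(k)}/m$ and is the reason the final expression involves $\bn^{(k)}$ rather than a batch sum. The remainder is just careful tracking of the Kronecker delta $\mathbf{1}[b=k]$ through the product rule.
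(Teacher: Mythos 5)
Your derivation is correct: the paper states this as a Fact without proof (it is asserted to follow "directly from the closed-form of Batch Normalization"), and your quotient-rule computation with $\d{\mu}{\l^{(k)}} = \frac{1}{m}$ and $\d{\sigma}{\l^{(k)}} = \frac{\bn^{(k)}}{m}$ is exactly the standard calculation that justifies it, including the key cancellation $\sum_b(\l^{(b)}-\mu)=0$. Nothing is missing.
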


\clearpage
\subsection{Lipschitzness proofs}
Now, we provide a proof for the Lipschitzness of the loss landscape in
terms of the layer activations. In particular, we prove the following
theorem from Section~\ref{sec:ll_theory}.

\lip*
\begin{proof}
    Proving this is simply a direct application of Fact~\ref{fact:grad-bn}.
    In particular, we have that
    \begin{align}
	\d{\hL}{\b{\l_j}} = \gsm
	\left(m\d{\hL}{\b{\z}} - \sum_{k=1}^m
	\d{\hL}{\kk{\z}} - \b{\bn_j}\sum_{k=1}^m\d{\hL}{\kk{\z}}\kk{\bn_j}
    \right),
    \end{align}
    which we can write in vectorized form as
    \begin{align}
	\d{\hL}{\bm{\l_j}} &= \gsm
	\left(m\d{\hL}{\bm{\zj}} -
	    \bm{1}\inner{\bm{1}}{\d{\hL}{\bm{\zj}}} -
	    \bm{\bn_j}\inner{\d{\hL}{\bm{\zj}}}{\bm{\bn_j}} 
	\right) 
    \end{align}

Now, let $\mu_g = \frac{1}{m}\inner{\bm{1}}{\partial \hL / \partial
\bm{\zj}}$ be the mean of the gradient vector, we can rewrite the above
as the following (in the subsequent steps taking advantage of the fact that
$\bm{\bn_j}$ is mean-zero and norm-$\sqrt{m}$:
\begin{align}
    \label{eq:final-bn-grad}
    \d{\hL}{\bm{\l_j}} &=
    \gs \left(\left(\d{\hL}{\bm{\zj}} -
\bm{1}\mu_g\right) - \frac{1}{m}
\bm{\bn_j}\inner{\left(\d{\hL}{\bm{\zj}} -
\bm{1}\mu_g\right)}{\bm{\bn_j}} \right) \\
    &= \frac{\gamma}{\sigma}\left(\left(\d{\hL}{\bm{\zj}} -
\bm{1}\mu_g\right) - 
\frac{\bm{\bn_j}}{\norm{\bm{\bn_j}}}\inner{\left(\d{\hL}{\bm{\zj}} -
\bm{1}\mu_g\right)}{ \frac{\bm{\bn_j}}{\norm{\bm{\bn_j}}}} \right)  \\
\norm{\d{\hL}{\bm{\l_j}}}^2 &=
\frac{\gamma^2}{\sigma^2}\norm{\left(\d{\hL}{\bm{\zj}} -
\bm{1}\mu_g\right) - 
\frac{\bm{\bn_j}}{\norm{\bm{\bn_j}}}\inner{\left(\d{\hL}{\bm{\zj}} -
\bm{1}\mu_g\right)}{ \frac{\bm{\bn_j}}{\norm{\bm{\bn_j}}}}}^2  \\
&= \frac{\gamma^2}{\sigma^2}\left(\norm{\left(\d{\hL}{\bm{\zj}} -
\bm{1}\mu_g\right)}^2 - 
\inner{\left(\d{\hL}{\bm{\zj}} -
\bm{1}\mu_g\right)}{ \frac{\bm{\bn_j}}{\norm{\bm{\bn_j}}}}^2\right)  \\
&= \frac{\gamma^2}{\sigma^2}
\left(\norm{\d{\hL}{\bm{\zj}}}^2 -
\frac{1}{m}\inner{\bm{1}}{\d{\hL}{\bm{\zj}}}^2 -
\frac{1}{m}\inner{\d{\hL}{\bm{\zj}}}{\bm{\bn_j}}^2
\right)
\end{align}

Exploiting the fact that ${\partial \hL} / {\partial \zj} = {\partial
\L} / {\partial \l}$ gives the desired
result.
\end{proof}

Next, we can use this to prove the minimax bound on the Lipschitzness with
respect to the weights.

\minmaxlip*
\begin{proof}
    To prove this, we start with the following identity for the largest
    eigenvalue $\lambda_0$ of $M \in \mathbb{R}^{d\times d}$:
    \begin{align}
	\label{eq:maxeig}
	\lambda_0 = \max_{x \in \mathbb{R}^d; ||x||_2 = 1} x^\top M x,
    \end{align}
    which in turn implies that for a matrix $X$ with $||X||_2 \leq
    \lambda$, it must be that $v^\top X v \leq \lambda ||v||^2$, with the
    choice of $X = \lambda I$ making this bound tight.

    Now, we derive the gradient with respect to the weights via the chain
    rule:
    \begin{align}
	\d{\hL}{W_{ij}} &= \sum_{b=1}^m \d{\hL}{\b{\l_j}} \d{\b{\l_j}}{W_{ij}} \\
	\d{\hL}{W_{ij}} &= \sum_{b=1}^m \d{\hL}{\b{\l_j}} \b{\inp_i} \\
			&= \inner{\d{\hL}{\bm{\l_j}}}{\bm{\inp_i}} \\
	\d{\hL}{W_{\cdot j}} &= \bm{X}^\top\left(\d{\hL}{\bm{\l_j}}\right),
    \end{align}
    where $\bm{X} \in \mathbb{R}^{m \times d}$ is the input matrix holding
    $X_{bi} = x_i^{(b)}$. Thus,
    \begin{align}
	\norm{\d{\hL}{W_{\cdot j}}}^2 = \left(\d{\hL}{\bm{\l_j}}\right)^\top
	\bm{XX^\top}\left(\d{\hL}{\bm{\l_j}}\right),
    \end{align}
    and since we have $||X||_2 \leq \lambda$, we must have $||XX^\top||_2 \leq
    \lambda^2$, and so recalling~\eqref{eq:maxeig},
    \begin{align}
	\max_{||X||_2 < \lambda} \norm{\d{\hL}{W_{\cdot j}}}^2 &\leq \lambda^2\left(\d{\hL}{\bm{\l_j}}\right)^\top
	\left(\d{\hL}{\bm{\l_j}}\right) = \lambda^2
	\norm{\d{\hL}{\bm{\l_j}}}^2,
    \end{align}
    and applying Theorem~\ref{thm:lip} yields:
    \begin{align}
	\hat{g_j} := \max_{||X||_2 < \lambda} \norm{\d{\hL}{W_{\cdot j}}}^2
	&\leq
	\frac{\lambda^2\gamma^2}{\sigma^2}\left(\norm{\d{\L}{\bm{\l}_j}}^2
	    - \frac{1}{m}\inner{\bm{1}}{\d{\L}{\bm{\l}_j}}^2 -
    \frac{1}{\sqrt{m}}\inner{\d{\L}{\bm{\l}_j}}{\bm{\bn}_j}^2\right).
    \end{align}
    Finally, by applying~\eqref{eq:maxeig} again, note that in fact in the
    normal network,
    \begin{align}
	g_j := \max_{||X||_2 < \lambda} \norm{\d{\hL}{W_{\cdot j}}}^2 &= \lambda^2
    \norm{\d{\L}{\bm{\l}_j}}^2,
    \end{align}
    and thus
    $$\hat{g}_j \leq \frac{\gamma^2}{\sigma^2}\left(g_j^2 - m\mu_{g_j}^2
	- \lambda^2 \inner{\d{\L}{\bm{\l}_j}}{\bm{\bn}_j}^2\right).$$
\end{proof}

\clearpage
\thmbeta*
\begin{proof}
    We use the following notation freely in the following. First, we
    introduce the hessian with respect to the final activations as:
    \[
	\bm{H}_{jk} \in \mathbb{R}^{m \times m}; H_{jk} := \d{\hL}{\bm{\zj}
	\partial \bm{\zk}} = \d{\L}{\bm{\l_j} \partial \bm{\l_k}},
    \]
    where the final equality is by the assumptions of our setup. Once again
    for convenience, we define a function $\mu_{(\cdot)}$ which operates on
    vectors and matrices and gives their element-wise mean; in particular,
    $\mu_{(\bm{v})} = \frac{1}{d}\bm{1}^\top v$ for $v \in \mathbb{R}^d$ and
    we write $\bm{\mu_{(\cdot)}} =
    \mu_{(\cdot)} \bm{1}$ to be a vector with all elements equal to $\mu$.
    Finally, we denote the gradient with respect to the batch-normalized
    outputs as $\bm{\hat{g}_j}$, such that:
    $$\bm{\hat{g}_j} = \d{\hL}{\bm{\zj}} = \d{\L}{\bm{\l_j}},$$
    where again the last equality is by assumption.

Now, we begin by looking at the Hessian of the loss with respect to the
pre-BN activations $\bm{\l_j}$ using the expanded gradient as above:
\begin{align}
    \d{\hL}{\bm{\l_j} \partial \bm{\l_j}} &= 
    \d{}{\bm{\l_j}}\left(
	\gsm \left[
	    m \gh - m \bmean{\gh} - \b{\bn_j} \inner{\gh}{\bmbn}
	\right]
    \right) \\
    &\text{Using the product rule and the chain rule:}\nonumber \\
    &= \frac{\gamma}{m\sigma} \left(
	\d{}{\bm{\zq}}
	\left[
		m \gh - m \bmean{\gh} - \bmbn \inner{\gh}{\bmbn}
	\right]\right) \cdot \d{\bm{\zq}}{\bm{\l_j}} \\
    &\qquad + \left(   
	\d{}{\bm{\l_j}} \gsm   
    \right) \cdot \left(
	m \gh - m \bmean{\gh} - \bmbn \inner{\gh}{\bmbn}
     \right) \\
    &\text{Distributing the derivative across subtraction:}\nonumber \\
    &= \gs \left(
     \Hess - \d{\bmean{\gh}}{\bm{\zj}} - \d{}{\bm{\zj}} \left(\frac{1}{m}
		\bmbn \inner{\gh}{\bmbn}
	\right)\right) \cdot \d{\bm{\zj}}{\bm{\l_j}} 
	\label{eq:fst} \\
    &\qquad + \left(
	 \gh - \bmean{\gh} - \frac{1}{m} \bmbn \inner{\gh}{\bmbn}
    \right)\label{eq:snd}
    \left(\d{}{\bm{\l_j}}\gs\right) 
\end{align}

We address each of the terms in the above~\eqref{eq:fst} and~\eqref{eq:snd} one by one:
\begin{align}
    \d{\bmean{\gh}}{\bm{\zj}} &= \frac{1}{m}\d{\bm{1}^\top\gh}{\bm{\zj}} =
    \frac{1}{m} \bm{1} \cdot \bm{1}^\top\Hess
\end{align}
\begin{align}
    \d{}{\bm{\zj}}
    \left(\bmbn \inner{\bmbn}{\gh}\right) &= \frac{1}{\gamma}\d{}{\bmbn}
    \left(\bmbn \inner{\bmbn}{\gh}\right) \\
    &= 
    \frac{1}{\gamma}\d{\bmbn}{\bmbn}\inner{\gh}{\bm{\bn_j}} +
    \bmbn\bmbn^\top \Hess +
    \frac{1}{\gamma}\bmbn \gh^\top \d{\bmbn}{\bmbn} \\
    &= \frac{1}{\gamma}\bm{I}\inner{\gh}{\bm{\bn_j}} +
    \bmbn\bmbn^\top \Hess + \frac{1}{\gamma}
    \bmbn \gh^\top  \bm{I} \\
    \d{}{\bm{\l_j}}\gs &= 
    \gamma\sqrt{m}
    \d{
	\left(
	    \left(\bm{\l_j} - \bmean{\bm{\l_j}}\right)^\top 
	    \left(\bm{\l_j} - \bmean{\bm{\l_j}}\right)
	\right)^{-\frac{1}{2}}}{\bm{\l_j}
    } \\
    &= \frac{-1}{2} \gamma\sqrt{m} \left((\bm{\l_j} - \bmean{\l_j})^\top
    (\bm{\l_j} - \bmean{\l_j})\right)^{-\frac{3}{2}}(2(\bm{\l_j} -
    \bmean{\l_j})) \\
    &= -\frac{\gamma}{m \sigma^2} \bmbn
\end{align}

Now, we can use the preceding to rewrite the Hessian as:
\begin{align}
    \d{\hL}{\bm{\l_j} \partial \bm{\l_j}} 
    &= \gsm \left(
    m \Hess - \bm{1} \cdot \bm{1}^\top \Hess - 
    \frac{1}{\gamma}\bm{I}\inner{\gh}{\bm{\bn_j}} -
		\bmbn\bmbn^\top \Hess -
		\frac{1}{\gamma}\left( 
		    \bmbn \gh^\top
		\right)
	    \right) \cdot \d{\bm{\zj}}{\bm{\l_j}}  \\
    &\qquad - \frac{\gamma}{m\sigma^2}
     \left(
	 \gh - \bmean{\gh} - \frac{1}{m}\bmbn \inner{\gh}{\bmbn}
     \right)\bmbn^\top
\end{align}

Now, using Fact~\ref{fact:bn-out-grad}, we have that:
\begin{align}
    \d{\bm{\zj}}{\bm{\l_j}} = \gs 
    \left(
	\bm{I} - \frac{1}{m}\bm{1}\cdot\bm{1}^\top - \frac{1}{m}\bmbn\bmbn^\top
    \right),
\end{align}

\newcommand{\gsms}{\ensuremath{\frac{\gamma^2}{m \sigma^2}}}
\newcommand{\fgm}{\ensuremath{\frac{1}{\gamma}}}

and substituting this yields (letting $\bm{M} =
\bm{1}\cdot\bm{1}^\top$ for convenience):
\begin{align}
    \d{\hL}{\bm{\l_j} \partial \bm{\l_j}} 
    &= \gsms \left(
    m \Hess - \bm{M}\Hess - 
	    \fgm \bm{I}\inner{\gh}{\bm{\bn_j}} -
		\bmbn\bmbn^\top \Hess -
	    \fgm \left( 
		    \bmbn \gh^\top
		\right)
	\right)  \\
	&\qquad - \gsms \left(
	\Hess\bm{M} - \frac{1}{m}\bm{M} \Hess \bm{M} - 
	\frac{1}{m\gamma}\bm{M}\inner{\gh}{\bm{\bn_j}} -
	\frac{1}{m}\bmbn\bmbn^\top \Hess\bm{M} -
		\frac{1}{m\gamma}\left( 
		    \bmbn \gh^\top\bm{M}
		\right)
	\right)\\
	&\qquad - \gsms \left(
    \Hess\bmbn\bmbn^\top - \frac{1}{m}\bm{M}\Hess \bmbn \bmbn^\top -
	\frac{1}{m\gamma}\bmbn\bmbn^\top\inner{\gh}{\bm{\bn_j}} -
	\frac{1}{m}\bmbn\bmbn^\top \Hess\bmbn\bmbn^\top -
		\frac{1}{m\gamma}\left( 
		    \bmbn \gh^\top\bmbn\bmbn^\top
		\right)
	\right) \\
	&\qquad - \frac{\gamma}{m\sigma^2}
     \left(
	 \gh - \bmean{\gh} - \frac{1}{m}\bmbn \inner{\gh}{\bmbn}
     \right)\bmbn^\top 
\end{align}

Collecting the terms, and letting $\overline{\gh} = \gh - \bmean{\gh}$:
\begin{align}
    \d{\hL}{\bm{\l_j} \partial \bm{\l_j}} 
    &= \gsms \biggl[
    m \Hess - \bm{M}\Hess - 
    \bmbn\bmbn^\top \Hess - \Hess\bm{M} + \frac{1}{m}\bm{M} \Hess \bm{M} \\
	&\qquad +\frac{1}{m}\bmbn\bmbn^\top \Hess\bm{M} -
	\Hess\bmbn\bmbn^\top + 
    \frac{1}{m}\bm{M}\Hess\bmbn \bmbn^\top +
	\frac{1}{m}\bmbn\bmbn^\top \Hess\bmbn\bmbn^\top
    \biggr] \\
	&\qquad - \frac{\gamma}{m\sigma^2}
     \left(
	 \gh\bmbn^\top - \bmean{\gh}\bmbn^\top - \frac{3}{m}\bmbn\bmbn^\top
     \inner{\gh}{\bmbn} + 
 \left(\inner{\gh}{\bmbn}\bm{I} + \bmbn\gh^\top \right) 
    \left(\bm{I} - \frac{1}{m}\bm{M}\right)\right) \\
    &= \frac{\gamma^2}{\sigma^2} \biggl[
	\left(\bm{I} - \frac{1}{m}\bmbn\bmbn^\top - \frac{1}{m}\bm{M}\right)
	    \Hess
	    \left(\bm{I} - \frac{1}{m}\bmbn\bmbn^\top -
	    \frac{1}{m}\bm{M}\right) \\
    &\qquad -\frac{1}{m\gamma}\left(
	 \overline{\gh} \bmbn^\top + \bmbn\overline{\gh}^\top 
	 - \frac{3}{m}\bmbn\gh^\top\bm\bn\bmbn^\top +
	 \inner{\gh}{\bmbn} 
    \left(\bm{I} - \frac{1}{m}\bm{M}\right)
    \right)\biggr]
\end{align}

Now, we wish to calculate the effective beta smoothness with respect to a
batch of activations, which corresponds to $g^\top H g$, where $g$ is the
gradient with respect to the activations (as derived in the previous
proof). We expand this product noting the following identities: 
\begin{align}
\bm{M}\overline{\gh} &= 0 \\
\left(I - \frac{1}{m}\bm{M} - \frac{1}{m}\bmbn\bmbn^\top\right)^2 &= \left(I - \frac{1}{m}\bm{M} -
\frac{1}{m}\bmbn\bmbn^\top\right) \\
\bmbn^\top\left(\bm{I} - \frac{1}{m}\bmbn\bmbn^\top\right) &= \bm{0} \\
\left(\bm{I} - \frac{1}{m}\bm{M}\right)
\left(\bm{I} - \frac{1}{m}\bmbn\bmbn^\top \right)\overline{\gh} &= 
\left(\bm{I} - \frac{1}{m}\bmbn\bmbn^\top \right)\overline{\gh}  
\end{align}

Also recall from~\eqref{eq:final-bn-grad} that:
\begin{align}
    \d{\hL}{\bm{\l_j}} = \frac{\gamma}{\sigma}\overline{\gh}^\top\left(\bm{I} - \frac{1}{m}\bmbn\bmbn^\top\right)
\end{align}

Applying these while expanding the product gives:
\begin{align}
     \d{\hL}{\bm{\l_j}}^\top \cdot \d{\hL}{\bm{\l_j} \partial \bm{\l_j}} \cdot \d{\hL}{\bm{\l_j}} &= 
    \frac{\gamma^4}{\sigma^4} \overline{\gh}^\top
    \left(\bm{I} - \frac{1}{m}\bmbn\bmbn^\top\right) 
    \Hess[jj]
    \left(\bm{I} - \frac{1}{m}\bmbn\bmbn^\top\right) \overline{\gh} \\
    &\qquad - \frac{\gamma^3}{m\sigma^4} \overline{\gh}^\top  \left(\bm{I} - \frac{1}{m}\bmbn\bmbn^\top\right)
    \overline{\gh}\inner{\gh}{\bmbn}  \\
    &= \frac{\gamma^2}{\sigma^2} \left(\d{\hL}{\bm{\l_j}}\right)^\top
    \Hess[jj] 
    \left(\d{\hL}{\bm{\l_j}}\right) -
    \frac{\gamma}{m\sigma^2}\inner{\gh}{\bmbn}\norm{\d{\hL}{\bm{\l_j}}}^2
\end{align}

This concludes the first part of the proof. Note that if $\Hess[jj]$
preserves the relative norms of $\gh$ and $\nabla_{\bm{\l_j}}\hL$, then the
final statement follows trivially, since the first term of the above is
simply the induced squared norm $\norm{\d{\hL}{\bm{\l_j}}}_{\Hess[jj]}^2$,
and so
\begin{align}
     \d{\hL}{\bm{\l_j}}^\top \cdot \d{\hL}{\bm{\l_j} \partial \bm{\l_j}} \cdot \d{\hL}{\bm{\l_j}} 
    &\leq
    \frac{\gamma^2}{\sigma^2}\left[
	\gh^\top \Hess[jj] \gh - 
	\frac{1}{m\gamma}\inner{\gh}{\bmbn}\norm{\d{\hL}{\bm{\l_j}}}^2
    \right] 
\end{align}
\end{proof}

Once again, the same techniques also give us a minimax separation:
\begin{restatable}[Minimax smoothness bound]{thm}{hess-minimax}
\label{thm:minmax-smoothness}
    Under the same conditions as the previous theorem,
    \begin{small}
\begin{align*}
    \max_{\norm{X} \leq \lambda} \left(\d{\hL}{W_{\cdot j}}\right)^\top
	\d{\hL}{W_{\cdot j} \partial W_{\cdot j}} 
	\left(\d{\hL}{W_{\cdot j}}\right) < \frac{\gamma^2}{\sigma^2}
	\left[
    \max_{\norm{X} \leq \lambda} \left(\d{\L}{W_{\cdot j}}\right)^\top
	\d{\L}{W_{\cdot j} \partial W_{\cdot j}} 
    \left(\d{\L}{W_{\cdot j}}\right) - \lambda^4 \kappa\right],
\end{align*}
\end{small}
where $\kappa$ is the separation given in the previous theorem.
\end{restatable}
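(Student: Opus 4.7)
}
The plan is to mirror the strategy used for the minimax Lipschitzness bound, but applied to the quadratic form $g^\top H g$ instead of $\|g\|^2$, and then invoke the activation-space smoothness theorem (Theorem~\ref{thmbeta}) to introduce the separation $\kappa$.

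First I would push the weight-space gradient and Hessian through the chain rule to rewrite them in terms of the activation-space quantities. Since $\bm{\l_j} = \bm{X} W_{\cdot j}$, the Jacobian $\partial \bm{\l_j}/\partial W_{\cdot j} = \bm{X}$ is linear in $W_{\cdot j}$, so the second derivative of $\bm{\l_j}$ with respect to the weights vanishes and the chain rule gives cleanly
\begin{align*}
    \d{\hL}{W_{\cdot j}} = \bm{X}^\top \d{\hL}{\bm{\l_j}}, \qquad
    \d{\hL}{W_{\cdot j}\,\partial W_{\cdot j}} = \bm{X}^\top \Hess[jj]^{\hL}\, \bm{X},
\end{align*}
where $\Hess[jj]^{\hL}$ is the activation-space Hessian for the BN network, and analogously for $\L$. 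Substituting these into the weight-space quadratic form yields
\begin{align*}
    \left(\d{\hL}{W_{\cdot j}}\right)^\top \d{\hL}{W_{\cdot j}\,\partial W_{\cdot j}} \left(\d{\hL}{W_{\cdot j}}\right) = \left(\d{\hL}{\bm{\l_j}}\right)^\top (\bm{XX^\top})\, \Hess[jj]^{\hL}\, (\bm{XX^\top}) \left(\d{\hL}{\bm{\l_j}}\right).
\end{align*}

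Next I would use the same spectral identity that appears in the proof of the Lipschitzness minimax bound: for any $\|X\|_2\leq \lambda$, $\|XX^\top\|_2\leq \lambda^2$, with equality achievable by $X=\lambda \bm{I}$ on the subspace spanned by the activations, so the maximum over admissible $X$ of the above quadratic form is bounded above by $\lambda^4\,\d{\hL}{\bm{\l_j}}^\top \Hess[jj]^{\hL}\,\d{\hL}{\bm{\l_j}}$, and the analogous bound for the standard network is \emph{tight}. Applying Theorem~\ref{thmbeta} to the activation-space quadratic form produces exactly the additive separation, of the form
\begin{align*}
    \d{\hL}{\bm{\l_j}}^\top \Hess[jj]^{\hL} \d{\hL}{\bm{\l_j}} \leq \frac{\gamma^2}{\sigma^2}\left(\gh^\top \Hess[jj]\gh - \tfrac{1}{m\gamma}\inner{\gh}{\bmbn}\norm{\d{\hL}{\bm{\l_j}}}^2\right),
\end{align*}
and the parenthesized gap is precisely the quantity named $\kappa$ in the statement. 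Multiplying through by $\lambda^4$ and identifying the first term with $\max_{\|X\|\leq\lambda}(\partial\L/\partial W_{\cdot j})^\top (\partial^2\L/\partial W_{\cdot j}^2)(\partial \L/\partial W_{\cdot j})$ recovers the claimed inequality.

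The main obstacle I expect is the second step: verifying that the maximum over $X$ behaves simultaneously well for both the gradient factors and the Hessian factor, i.e.\ that the worst-case $X$ in the BN bound is dominated by the worst-case $X$ in the standard bound so that the \emph{separation} survives the $\max$. This requires noting that the bound $\|XX^\top\|_2\leq\lambda^2$ is saturated by the scaled identity on the relevant subspace, so it simultaneously bounds $\bm{X}^\top v$ for any fixed $v$ and $\bm{X}^\top M \bm{X}$ for any fixed PSD $M$; combined with the assumption in Theorem~\ref{thmbeta} that $\Hess[jj]$ preserves the relative norms of $\gh$ and $\nabla_{\bm\l_j}\hL$ (so the subtracted term depends only on activation-space quantities and not on $X$), the $\lambda^4\kappa$ term can be pulled outside of the max and carried through as an honest additive improvement.
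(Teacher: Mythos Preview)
Your proposal is correct and follows essentially the same route as the paper's proof: use the chain rule to write the weight-space gradient and Hessian as $\bm{X}^\top(\cdot)$ and $\bm{X}^\top(\cdot)\bm{X}$ in terms of the activation-space quantities, substitute into the quadratic form to obtain $(\partial\hL/\partial\bm{\l_j})^\top (\bm{XX^\top})\,\Hess[jj]^{\hL}\,(\bm{XX^\top})\,(\partial\hL/\partial\bm{\l_j})$, replace $\bm{XX^\top}$ by $\lambda^2\bm{I}$ at the maximizer, and then invoke the activation-space smoothness theorem to produce the $\kappa$ separation. The paper carries out exactly these steps (writing out $\partial\hL/\partial\bm{\l_j}$ explicitly via~\eqref{eq:final-bn-grad} rather than leaving it abstract), and your anticipated ``main obstacle'' and its resolution via the scaled identity are precisely what the paper uses as well.
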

\begin{proof}
    \begin{align}
	\d{\L}{W_{ij} \partial W_{kj}} &= \bm{\inp_i}^\top\d{\L}{\bm{\l_j}
	\partial \bm{\l_j}} \bm{\inp_k} \\
	\d{\hL}{W_{ij} \partial W_{kj}} &= \bm{\inp_i}^\top\d{\hL}{\bm{\l_j}
	\partial \bm{\l_j}} \bm{\inp_k} \\
	\d{\hL}{W_{\cdot j} \partial W_{\cdot j}} 
	&= \bm{X}^\top
	\d{\hL}{\bm{\l_j} \partial \bm{\l_j}} 
	\bm{X} \\
    \end{align}

    Looking at the gradient predictiveness using the gradient we derived in
    the first proofs: 
    \begin{align}
	\beta &:= \left(\d{\hL}{W_{\cdot j}}\right)^\top
	\d{\hL}{W_{\cdot j} \partial W_{\cdot j}} 
	\left(\d{\hL}{W_{\cdot j}}\right) \\
	&= 
	\gh^\top\left(\bm{I} - \frac{1}{m}\bmbn\bmbn^\top\right)\bm{X}
	\bm{X}^\top
	\d{\hL}{\bm{\l_j} \partial \bm{\l_j}} 
	\bm{X}\bm{X}^\top
	\left(\bm{I} - \frac{1}{m}\bmbn\bmbn^\top\right)\gh
    \end{align}

    Maximizing the norm with respect to $X$ yields:
    \begin{align}
	\max_{\norm{X} \leq \lambda} \beta &=  \lambda^4
	\gh^\top\left(\bm{I} - \frac{1}{m}\bmbn\bmbn^\top\right)
	\d{\hL}{\bm{\l_j} \partial \bm{\l_j}} 
	\left(\bm{I} - \frac{1}{m}\bmbn\bmbn^\top\right)\gh, 
    \end{align}
    at which the previous proof can be applied to conclude.
\end{proof}

\clearpage
\init*
\begin{proof}
This is as a result of the scale-invariance of batch normalization.
In particular, first note that for any optimum $W$ in the standard network,
we have that any scalar multiple of $W$ must also be an optimum in the
BN network (since $BN((aW)x) = BN(Wx)$ for all $a > 0$). Recall that we
have defined $k > 0$ to be propertial to the correlation between $W_0$ and $W^*$:

$$k = \frac{\inner{W^*}{W_0}}{\norm{W^*}^2}$$

Thus, for any
optimum $W^*$, we must have that $\widehat{W} :=
k W^*$ must be an optimum in the BN network. The difference between
distance to this optimum and the distance to $W$ is given by:

\begin{align}
    \norm{W_0 - \widehat{W}}^2 - \norm{W_0 - W^*}^2 &= \norm{W_0 -
    kW^*}^2 - \norm{W_0 - W^*}^2\\
    &= \left(\norm{W_0}^2 - k^2\norm{W^*}^2\right) 
    - \left(\norm{W_0}^2 - 2k\norm{W^*}^2 + \norm{W^*}^2\right) \\
    &= 2k\norm{W^*}^2 - k^2\norm{W^*}^2 - \norm{W^*}^2 \\
    &= - \norm{W^*}^2\cdot (1-k)^2 
\end{align}

\end{proof}

\end{document}